\documentclass{article}

\usepackage[accepted]{icml/icml2026}

\usepackage{amsmath,amsfonts,bm}

\def\eqref#1{equation~\ref{#1}}

\def\1{\bm{1}}

\DeclareMathAlphabet{\mathsfit}{\encodingdefault}{\sfdefault}{m}{sl}
\SetMathAlphabet{\mathsfit}{bold}{\encodingdefault}{\sfdefault}{bx}{n}

\def\sA{{\mathbb{A}}}
\def\sB{{\mathbb{B}}}

\def\sG{{\mathbb{G}}}

\def\sS{{\mathbb{S}}}

\newcommand{\E}{\mathbb{E}}

\newcommand{\R}{\mathbb{R}}

\usepackage[T1]{fontenc}    
\usepackage{hyperref}       
\usepackage{url}            
\usepackage{booktabs}       
\usepackage{amsfonts}       
\usepackage{nicefrac}       
\usepackage{microtype}      
\usepackage[dvipsnames]{xcolor}

\usepackage{mathtools}
\usepackage{amsthm, amsmath, amssymb}
\usepackage{leftidx}
\usepackage{bm}
\usepackage{multirow}
\usepackage{enumitem}
\usepackage{pgfplots}
\usepackage{pgfplotstable}
\pgfplotsset{compat=1.18}

\usepackage{cleveref}

\usepackage{tikz}
\usepackage{pgfplots}
\pgfplotsset{compat=1.18}

\def\FileTitle{Finding Most Influential Sets}

\def\FileSubject{Research paper on finding the most influential set in a selected sample on estimates of interest}
\def\FileKeywords{
    attribution, least squares, optimization, compression, robustness auditing, algorithmic fairness, causal inference, econometrics, tatistics, machine learning
}

\hypersetup{
  colorlinks=true, urlcolor=MidnightBlue, linkcolor=MidnightBlue, anchorcolor=MidnightBlue, citecolor=MidnightBlue, filecolor=., menucolor=., runcolor=., 
  bookmarksdepth = section, 
  pdftitle=\FileTitle, pdfsubject=\FileSubject, pdfkeywords=\FileKeywords, pdfcreator=\LaTeX, pdfproducer=\LaTeX
}

\newtheorem{proposition}{Proposition}

\newtheorem{assumption}{Assumption}
\newtheorem{theorem}{Theorem}
\newtheorem{lemma}{Lemma}
\newtheorem{remark}{Remark}[section]
\newtheorem*{example}{Example}

\newtheorem*{theorem*}{Theorem}
\newtheorem*{proposition*}{Proposition}
\newtheorem*{corollary*}{Corollary}
\newtheorem*{definition}{Definition}
\newtheorem*{lemma*}{Lemma}

\title{\FileTitle}

\author{%
  Lucas D.\ Konrad \\
  Vienna University of Economics and Business \\
  1020 Vienna, Austria \\
  \texttt{lucas.konrad@wu.ac.at} \\
  \And
  Nikolas Kuschnig \\
  Monash University \\
  3145 Caulfield, Australia \\
  \texttt{nikolas.kuschnig@monash.edu} \\
}

\begin{document}

\twocolumn[
  \icmltitle{\FileTitle}
  \icmlsetsymbol{equal}{*}

  \begin{icmlauthorlist}
    \icmlauthor{Lucas D. Konrad}{equal,wu}
    \icmlauthor{Nikolas Kuschnig}{equal,monash}
  \end{icmlauthorlist}

  \icmlaffiliation{wu}{Vienna University of Economics and Business, Austria}
  \icmlaffiliation{monash}{Monash University, Australia}

  \icmlcorrespondingauthor{Nikolas Kuschnig}{nikolas.kuschnig@monash.edu}

  \icmlkeywords{\FileKeywords}

  \vskip 0.3in
]

\printAffiliationsAndNotice{\icmlEqualContribution}

\begin{abstract}
Identifying \emph{most influential sets} (MIS) --- size-$k$ subsets whose removal maximally changes a target estimand --- is typically infeasible because it requires searching over $\binom{n}{k}$ subsets.
For estimands with linear-fractional leave-set-out effects, we show that MIS selection reduces to a one-parameter sequence of top-$k$ problems.
Dinkelbach's method yields an algorithm with $\mathcal{O}(n)$ cost per iteration and finite termination.
For fixed residualized inputs, the algorithm returns a globally optimal set for the univariate ratio objective, including the oracle-residualized partial linear model.
With estimated nuisance functions, uniform denominator and generated-score stability imply approximation to the first-order oracle orthogonal-score objective; exact set recovery follows under a separation condition.
Simulations and applications show that the method recovers exact MIS that were previously computationally inaccessible.
\end{abstract}

\section{Introduction}

Most influential sets (MIS) are subsets of the training data whose removal induces the largest change in a target quantity of interest, such as a regression coefficient, treatment effect, or prediction. 
Formally, for a target functional $\phi$ and a leave-set-out estimator, the size-$k$ MIS, $\sS_k^{\max}$, maximizes the leave-set-out discrepancy over all subsets $\sS$ with $\lvert \sS \rvert = k.$
Set influence differs qualitatively from singleton influence because observations interact; some points reinforce each other, while others mask each others' effects.
MIS therefore help diagnose which subsamples drive inferences and how models amplify or suppress particular groups.

In applied work, MIS provide target-specific diagnostics for interpretability, accountability, fairness, robustness, and data curation
\citep{barocas2020fairness, chhabra_what_2023, ghorbani_data_2019, rudin_stop_2019, hudgens2008toward, souly2025, cote2024data, mirzasoleiman2020coresets}.

Despite their relevance, the study of MIS has historically been constrained by computation.
Exact identification requires maximizing over $\binom{n}{k}$ subsets, which is computationally infeasible, even for moderate datasets.
Classical influence diagnostics focus on individual data points \citep{cook1979Influential}, and practical extensions to sets are largely insensitive to higher-order interactions \citep{chatterjee_influential_1986}.
Consequently, the literature has often favored robust estimation --- controlling rather than identifying worst-case sensitivity \citep{huber2009robust}.

Recent work on influence functions --- infinitesimal approximations that are widely used as diagnostic tools \citep{hampel_influence_1974, koh_understanding_2017} --- has renewed interest in MIS.
This has yielded influence-function-based approximations \citep{broderick_automatic_2021}, greedy heuristics and failure modes \citep{hu2024Most, huang_approximations_2025, kuschnig_hidden_2021}, influence bounds \citep{moitra_provably_2023, freund_towards_2023, rubinstein_robustness_2024}, and formal tests for excessive influence \citep{konrad_testing_2025}.
However, influence functions remain inaccurate for extreme influence and for sets of data points \citep{basu_second-order_2020, koh_accuracy_2019}, and efficient and accurate methods for MIS selection have remained unavailable.

In this paper, we develop an efficient algorithm for size-$k$ MIS selection for influence targets whose leave-set-out effects admit a ratio representation.
The key is that these effects can be written as linear-fractional functions of the removed set, transforming the combinatorial search into a one-dimensional parametric optimization problem.
Using \citet{dinkelbach1967}'s method, each subproblem reduces to selecting the top-$k$ scores, giving $\mathcal{O}(n)$ cost per iteration and finite termination.
For fixed residualized inputs, the algorithm returns a globally optimal set for the corresponding univariate ratio objective.
In partial linear models, this gives exact oracle MIS selection when the nuisance functions are known.
With estimated nuisance functions, Neyman orthogonality supports first-stage stability; a separation condition yields consistent selection of the oracle MIS.

\subsection{Contributions}
We make four contributions.
\begin{itemize}[leftmargin=*,itemsep=2pt]
    \item \textbf{Reduction.} We identify a class of leave-set-out influence objectives with linear-fractional structure and show that size-$k$ MIS selection for these reduces to a one-parameter sequence of top-$k$ selection problems.
    \item \textbf{Algorithm.} We apply Dinkelbach's method to obtain a finite-step algorithm with $\mathcal{O}(n)$ cost per iteration. For fixed residualized inputs, the algorithm returns a globally optimal set for the univariate ratio objective.
    \item \textbf{Theory.} The algorithm is exact for any fixed residualized inputs. In partial linear models, the scaled objective uniformly approximates a first-order oracle orthogonal-score objective under generated-score and denominator stability; value consistency and exact set recovery follow under a separation condition.
    \item \textbf{Evidence.} We benchmark the method against enumeration and greedy baselines and apply it to randomized experiments and datasets from machine learning, biology, economics, and statistics, recovering MIS at scales where enumeration is infeasible.
\end{itemize}

\subsection{Outline}
The remainder of this paper is organized as follows.
\Cref{sec:prob} introduces the partial linear setting and formalizes the MIS problem.
\Cref{sec:meth} develops the linear-fractional reduction, presents the algorithm, and states the theoretical guarantees.
\Cref{sec:exp} evaluates the method in simulations and applications.
\Cref{sec:disc} discusses relevance, limitations, and implications, and \Cref{sec:concl} concludes.\footnote{
    The Appendix contains proofs and additional details; code is provided at \url{https://github.com/nk027/findingMIS}.
}

\section{Partial Linear Setting} \label{sec:prob}
We study influential sets using residualization in partial linear models \citep[PLM; see][]{chernozhukov2018, robinson1988root}.
This section introduces the model and estimator, then formalizes influence and $k$-most influential sets.

\subsection{Model and Estimation}
Consider the PLM
\begin{equation} \label{eq:plm}
  y_i = x_i \beta_0 + g_0(Z_i) + u_i, \quad \E[u_i \!\mid\! x_i, Z_i] = 0,
\end{equation}
where $(y_i, x_i, Z_i)_{i=1}^n$ are i.i.d.\ draws from $P,$ $x_i\in\R$ is the treatment, $\beta_0 \in \R$ is the parameter of interest, and $g_0 : \R^d \to \R$ is an unknown nuisance function of covariates $Z_i \in \R^d$.
Assume $\E[u_i^2 \!\mid\! Z_i] = \sigma^2(Z_i) < \infty.$

Let $h_0(Z_i) \coloneqq \E[ x_i \!\mid\! Z_i ],$ write
\(
    x_i = h_0(Z_i) + v_i, 
\)
and define $m_0(Z_i) \coloneqq \E[y_i\!\mid\! Z_i] = g_0(Z_i) + \beta_0 h_0(Z_i).$

We estimate $\beta_0$ using residualized outcomes and treatments:
\begin{equation} \label{eq:residuals}
    \tilde y_i \coloneqq y_i - \hat m(Z_i),
        \qquad
    \tilde x_i \coloneqq x_i - \hat h(Z_i),
\end{equation}
where $(\hat m, \hat h)$ are flexible cross-fitting first-stage estimators.
The residualized (Robinson) second-stage estimator is the ordinary least squares (OLS) coefficient from regressing $\tilde y_i$ on $\tilde x_i,$
\begin{equation} \label{eq:beta_hat}
    \hat\beta \coloneqq \arg\min_{\beta \in \R} \sum_{i=1}^n\left( \tilde y_i-\tilde x_i\beta \right)^2
        = \frac{ \sum_{i=1}^n \tilde x_i \tilde y_i }{ \sum_{i=1}^n \tilde x_i^2 }.
\end{equation}
This estimator exists when $\sum_{i=1}^n \tilde x_i^2 > 0;$ otherwise a ridge-stabilized version can be used. 

\subsection{Influence and Most Influential Sets}
Let $[n]\coloneqq \{1, \dots, n\}$. For any index set $\sS \subseteq [n]$, let $\hat\beta_{-\sS}$ denote the residualized estimator after dropping $\sS$ from the second stage and recomputing on $[n] \setminus \sS.$

\begin{definition}[Influence]
  The influence of a set $\sS$ on a scalar target $\phi : \R \to \R$ is
  \[
    \Delta\left(\sS; \phi\right) = \phi\left( \hat\beta \right) - \phi\left( \hat\beta_{-\sS} \right).
  \]
\end{definition}

The target functional determines the direction and scale of change.
We focus on signed affine targets, $\phi(\beta) = a \beta + b$.
Because $b$ cancels and non-zero $a$ only rescales, and possibly reverses, the ordering of sets, we set $a = 1$ in the exposition.
Targets with $a<0$ correspond to directed decreases, while predicted-value targets at treatment level $x_0$ correspond to $a = x_0.$

\begin{definition}[Most Influential Set] \label{def:mis}
    For $k < n$, a \emph{size-$k$ most influential set} is any maximizer
    \[
        \sS_{k}^{\max} \in
            \underset{\sS \subset \left[ n \right], \lvert\sS\rvert = k}{\arg\max} \Delta\left( \sS; \phi \right),
    \]
    with maximum influence $\Delta_k^{\max} \coloneqq \Delta\left( \sS_k^{\max}; \phi \right).$
\end{definition}

The influence of $\sS$ on $\hat \beta$ in \Cref{eq:beta_hat} admits a ratio form.

\begin{proposition}[Leave-set-out Ratio Form] \label{prop:dfb}
    Fixing $(\tilde x_i, \tilde y_i)_{i=1}^n,$ the influence of $\sS$ on $\hat \beta$ is
    \[
        \hat\beta - \hat\beta_{-\sS} = 
            \frac{ \sum_{s \in \sS} \tilde{x}_s \tilde{r}_s }{ \sum_{t \not\in \sS} \tilde{x}_t^2 },
    \]
    where $\tilde r_s \coloneqq \tilde y_s - \hat \beta \tilde x_s$.
\end{proposition}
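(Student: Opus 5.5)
The identity is pure algebra once the residualized inputs $(\tilde x_i,\tilde y_i)_{i=1}^n$ are held fixed, so I plan to work entirely with the closed form in \Cref{eq:beta_hat}. Write $A \coloneqq \sum_{i=1}^n \tilde x_i \tilde y_i$ and $B \coloneqq \sum_{i=1}^n \tilde x_i^2$, so $\hat\beta = A/B$. For a removed set $\sS$, let $A_\sS \coloneqq \sum_{s\in\sS}\tilde x_s\tilde y_s$ and $B_\sS \coloneqq \sum_{s\in\sS}\tilde x_s^2$. Because the first stage is not refit (only the second stage is recomputed on $[n]\setminus\sS$), the leave-set-out estimator is
\[
  \hat\beta_{-\sS} = \frac{A - A_\sS}{B - B_\sS},
\]
and it is well defined exactly when $\sum_{t\notin\sS}\tilde x_t^2 = B - B_\sS > 0$. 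I will assume this condition throughout, in the same way the paper assumes $B>0$ for $\hat\beta$ itself.

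The key step is to avoid subtracting two fractions directly. Instead I will use the normal equation $A = \hat\beta B$ and rewrite the numerator of $\hat\beta_{-\sS}$ as
\[
  A - A_\sS = \hat\beta (B - B_\sS) + \hat\beta B_\sS - A_\sS .
\]
Dividing through by $B - B_\sS$ then gives
\[
  \hat\beta_{-\sS} = \hat\beta - \frac{A_\sS - \hat\beta B_\sS}{B - B_\sS}.
\]
Finally, I will identify the remaining numerator term by term as $A_\sS - \hat\beta B_\sS = \sum_{s\in\sS}\tilde x_s(\tilde y_s - \hat\beta\tilde x_s) = \sum_{s\in\sS}\tilde x_s\tilde r_s$. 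Moving $\hat\beta_{-\sS}$ to the left-hand side yields the stated formula.

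The only genuine obstacle is the well-definedness issue. I will state explicitly that the identity holds for every $\sS$ with $\sum_{t\notin\sS}\tilde x_t^2>0$. I will also remark that the formula uses full-sample residuals $\tilde r_s$, which is precisely what makes the map $\sS\mapsto\hat\beta-\hat\beta_{-\sS}$ linear-fractional in the indicator vector of $\sS$.
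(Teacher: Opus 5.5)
Your proposal is correct and follows essentially the same route as the paper: use the full-sample normal equation $\sum_i \tilde x_i\tilde y_i=\hat\beta\sum_i\tilde x_i^2$, subtract the contribution of $\sS$, recognize $\sum_{s\in\sS}\tilde x_s(\tilde y_s-\hat\beta\tilde x_s)=\sum_{s\in\sS}\tilde x_s\tilde r_s$ as the removed score, and divide by the remaining curvature. Stating explicitly that the identity requires $\sum_{t\notin\sS}\tilde x_t^2>0$ is a sensible addition; the paper leaves this implicit in its proof and only imposes it later as Assumption~\ref{ass:positive_curvature}.
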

\begin{proof}
The full-sample normal equation gives $\sum_{i=1}^n \tilde x_i \tilde y_i=\hat\beta\sum_{i=1}^n \tilde x_i^2.$
Subtracting the contribution of observations in $\sS$ yields
\[
    \sum_{i\notin \sS} \tilde x_i \tilde y_i =
        \hat\beta \sum_{i\notin \sS} \tilde x_i^2 +
            \sum_{s\in \sS}\tilde x_s(\hat\beta \tilde x_s-\tilde y_s).
\]
Using $\tilde r_s = \tilde y_s - \hat\beta \tilde x_s$,
\[
    \sum_{i\notin \sS} \tilde x_i \tilde y_i =
        \hat\beta \sum_{i\notin \sS} \tilde x_i^2 -
            \sum_{s\in \sS}\tilde x_s \tilde r_s.
\]
Dividing by $\sum_{i\notin \sS}\tilde x_i^2$ gives
\[
    \hat\beta_{-\sS} =
        \hat\beta -
            \frac{\sum_{s\in \sS}\tilde x_s \tilde r_s}{\sum_{i\notin \sS}\tilde x_i^2},
\]
which proves the claim \citep[see][for the multivariate generalization]{konrad_testing_2025}.
\end{proof}

This is an exact finite-sample deletion identity, and not an infinitesimal influence-function approximation.
It separates two mechanisms; the numerator
\[
    W(\sS) \coloneqq \sum_{s\in\sS}\tilde x_s\tilde r_s
\]
is the score removed from the full-sample normal equation, while the denominator
\[
    G(\sS) \coloneqq \sum_{i\notin\sS}\tilde x_i^2
\]
is the residual curvature that remains after removal.
Thus, a set can be influential either because it removes observations with large aligned scores, or because it removes high-curvature observations that amplify the effect of the remaining score imbalance.

Although $W(\sS)$ and $G(\sS)$ are additive separately, their ratio is not.
For $i \notin \sS$,
\[
    \frac{W(\sS)+w_i}{G(\sS)-c_i} - \frac{W(\sS)}{G(\sS)} =
    \frac{
        w_iG(\sS)+c_iW(\sS)
    }{
        G(\sS)\{G(\sS)-c_i\}
    },
\]
where $w_i = \tilde x_i \tilde r_i$ and $c_i = \tilde x_i^2$.
The first term reflects $i$'s direct score contribution; the second depends on the score already accumulated by $\sS$.
These interactions generate two phenomena.
First, influence can be \emph{joint}: observations with limited singleton influence may become highly influential when removed together.
Second, influence can be \emph{masked}: observations may appear unimportant in singleton rankings because their effect emerges only after other observations are removed.
We use these terms informally in the main text; formal definitions of joint influence, masking, and greedy failure are given in \Cref{app:interaction_definitions}.

\subsection{Limitations of Existing Approaches}
Standard methods for identifying MIS face obstacles:\footnote{
    See the discussion in \citet{kuschnig_hidden_2021, hu2024Most, huang_approximations_2025} for further discussion.
}
\begin{itemize}[leftmargin=*,itemsep=2pt]
    \item \emph{Enumeration} is computationally prohibitive. Exact search requires evaluating $\binom{n}{k}$ sets; for example, evaluating $\binom{100}{10}$ sets would take roughly 200 days at $1\mu$s per set, and $k=11$ would increase this to roughly $4.5$ years.
    \item \emph{First-order approximations}, including influence functions \citep{broderick_automatic_2021}, are efficient but rank observations using local or singleton information, and can therefore miss joint influence and masking effects.
    \item \emph{Greedy selection} \citep{kuschnig_hidden_2021} builds a set sequentially by adding the observation with the largest current marginal gain. This implicitly assumes nested influential sets across $k$. Here, however, marginal gains depend on the current set through both $W(\sS)$ and $G(\sS)$, and early choices can exclude the optimal size-$k$ set.
\end{itemize}

\begin{example}[Greedy Failure Mode]
\Cref{fig:3mis} illustrates the resulting failure mode. 
Point~A is the most influential singleton, so a greedy algorithm selects it first.
After this choice, the search is restricted to sets containing A.
The optimal $3$-MIS, however, consists of the three leftmost points.
These points are masked in singleton rankings --- none is as influential as A on its own.
Their influence is joint, emerging only when they are removed together; collectively, they accumulate enough score and remove enough curvature to flatten the fitted line.
The greedy path cannot remove A after selecting it, and cannot recover the optimal size-$3$ set.
\end{example}

\begin{figure}
    \includegraphics{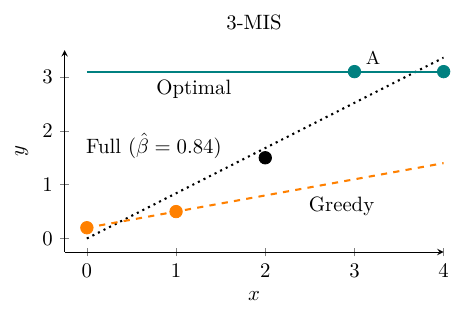}
    \caption{
        Greedy failure when finding the $3$-MIS for $n = 5.$
        Point~A is the most influential singleton and is therefore selected first by a greedy procedure.
        The exact $3$-MIS instead consists of the three leftmost points, whose influence is joint and masked in singleton rankings.
        Enumeration recovers the optimal leave-set-out slope of $0$ (solid teal line), while greedy selection is trapped on a suboptimal path.
    } \label{fig:3mis}
\end{figure}

\section{Method} \label{sec:meth}

Finding MIS requires balancing additive score contributions in the numerator against amplification from the shrinking denominator in \Cref{prop:dfb}.
Conditional on fixed residualized inputs, this trade-off yields a linear-fractional optimization problem over subsets.

\subsection{Linear-fractional Reduction} \label{subsec:linfrac}

Map the influence components in \Cref{prop:dfb} to weights and costs,
\[
    w_i \coloneqq \tilde x_i \tilde r_i,
    \qquad
    c_i \coloneqq \tilde x_i^2,
    \qquad
    T \coloneqq \sum_{i=1}^n c_i.
\]
For any set $\sS$ of size $k$,
\begin{equation*}
    \hat\beta - \hat\beta_{-\sS} = \frac{W(\sS)}{G(\sS)},
\end{equation*}
where $W(\sS) \coloneqq \sum_{i\in\sS} w_i,$ and $G(\sS) \coloneqq T - \sum_{i\in\sS}c_i.$

\begin{assumption}[Positive denominator] \label{ass:positive_curvature}
For the subset size $k$ under consideration,
\[
    G(\sS) > 0 \qquad \text{for all } \sS \subset [n] \text{ with } |\sS| = k.
\]
\end{assumption}

\Cref{ass:positive_curvature} ensures that the estimator and fractional objective are well defined on the feasible class.
If the condition fails for some sets, the same reduction applies to a ridge-stabilized denominator, replacing $G(\sS)$ by $G(\sS) + \lambda$ for $\lambda > 0$.

\subsection{Algorithm and Complexity}

To maximize the fixed-input ratio over size-$k$ sets, we use \citet{dinkelbach1967}'s method.
For an auxiliary parameter $\eta$, define
\begin{equation} \label{eq:dinkelbach_problem}
    F_\eta(\sS) \coloneqq W(\sS) - \eta\, G(\sS) 
\end{equation}
For fixed $\eta$,
\[
    F_\eta(\sS) =
        \sum_{i\in\sS} (w_i + \eta c_i) - \eta T,
\]
so the term $-\eta T$ is constant in $\sS$.
Maximizing $F_\eta(\sS)$ over $|\sS| = k$ is therefore equivalent to selecting the $k$ largest scores
\[
    s_i(\eta) \coloneqq w_i + \eta c_i.
\]
\Cref{alg:dinkelbach} alternates between this top-$k$ selection and the ratio update
\[
    \eta^{(t+1)} = \frac{
        W(S^{(t+1)})
    }{
        G(S^{(t+1)})
    }.
\]
With exact arithmetic, finite termination occurs when the selected set no longer changes, equivalently when the Dinkelbach residual is zero.
The tolerance $\tau$ is used as a numerical stopping rule.

\begin{algorithm}[ht]
\caption{Fractional $k$-MIS selection} \label{alg:dinkelbach}
    \begin{algorithmic}[1]
    \REQUIRE weights $(w_i, c_i)_{i=1}^n$, size $k$, tolerance $\tau$,
    \REQUIRE (optional) seed $\eta^{(0)} \leftarrow 0, S^{0} \leftarrow \varnothing$
    \STATE Cache: $T \leftarrow \sum_{i=1}^n c_i$
    \FOR{$t = 0, 1, 2, \dots$}
      \STATE Scores: $s_i \leftarrow w_i+\eta^{(t)}c_i$ for all $i$
      \STATE Set: $S^{(t+1)} \leftarrow \text{TopK}(s, k)$
      \STATE Numerator: $W^{(t+1)} \leftarrow \sum_{i\in S^{(t+1)}} w_i$
      \STATE Denominator: $G^{(t+1)} \leftarrow T - \sum_{i\in S^{(t+1)}} c_i$
      \STATE Update: $\eta^{(t+1)} \leftarrow W^{(t+1)}/G^{(t+1)}$
      \IF{$S^{(t+1)}=S^{(t)}$ \OR $|\eta^{(t+1)}-\eta^{(t)}| < \tau$}
         \STATE \textbf{return} $S^{(t+1)}, \eta^{(t+1)}$
      \ENDIF
    \ENDFOR
    \end{algorithmic}
\end{algorithm}

\paragraph{Warm-start}
At the optimum, the Dinkelbach parameter equals the maximal fixed-input influence,
\[
    \eta^\star = \max_{|\sS|=k} \frac{W(\sS)}{G(\sS)}.
\]
This identity can be used to reduce iterations when computing MIS paths across subset sizes.
When tracing $k=1, \dots, K$, \Cref{alg:dinkelbach-multi} initializes the run for size $k$ at the solution value from size $k - 1$.

\begin{algorithm}[ht]
\caption{Fractional MIS selection for $1, \dots, K$} \label{alg:dinkelbach-multi}
\begin{algorithmic}[1]
\REQUIRE weights $(w_i,c_i)_{i=1}^n$, size $K$, tolerance $\tau$
\STATE Initialize: $\eta^{(0)} \leftarrow 0$
\FOR{$k = 1,2,\dots,K$}
    \STATE $(S^{(k)}, \eta^{(k)}) \leftarrow \mathrm{Alg1} \! \left((w, c), k, \tau, \eta^{(k-1)}\right)$
\ENDFOR
\STATE \textbf{return} $(S^{(k)},\eta^{(k)})_{k=1}^K$
\end{algorithmic}
\end{algorithm}

\paragraph{Computational Complexity}

Each iteration of \Cref{alg:dinkelbach} has three steps.
First, the scores $s_i(\eta)=w_i+\eta c_i$ are computed in $\mathcal{O}(n)$ time.
Second, the top-$k$ scores are selected, which can be done in expected $\mathcal{O}(n)$ time using linear-time selection, or in $\mathcal{O}(n \log k)$ time using a size-$k$ heap.
Once the set is selected, $W(\sS)$ and $G(\sS)$ are computed in $\mathcal{O}(k)$ time.
Thus, with linear-time selection, each iteration costs $\mathcal{O}(n)$ time and $\mathcal{O}(n)$ memory.

Let $I_k$ denote the number of Dinkelbach iterations for subset size $k$.
The total cost for one size-$k$ problem is therefore $\mathcal{O}(I_k n)$.
For an MIS path over $k = 1, \dots, K$, the cost is
$
    \mathcal{O}\!\left( n \sum_{k=1}^K I_k \right),
$
with warm starts from \Cref{alg:dinkelbach-multi} typically reducing $I_k$.
Because the feasible class is finite, the algorithm terminates finitely; empirically, only a small number of iterations is needed.

\subsection{Optimization Exactness of \Cref{alg:dinkelbach}}

The algorithm solves the fixed-input fractional optimization problem exactly.
It converts the ratio problem into a sequence of linear maximization problems over a finite feasible class, each solved exactly by top-$k$ selection.

\begin{theorem}[Finite Exact MIS Selection] \label{thm:exact_convergence}
Fix $n \in \mathbb{N}$, $k \in \{1, \dots, n - 1\}$ and assume \Cref{ass:positive_curvature} holds. Let $\eta^\star$ denote the optimal ratio value, let
\[
    \mathcal{R}_k \coloneqq
    \left\{
        \frac{W(\sS)}{G(\sS)} :
        \sS\subset[n],\ |\sS| = k
    \right\},
\]
and set $M\!\coloneqq\! |\mathcal{R}_k|.$
Then for any $\eta^{(0)}\!\in\! \mathbb{R}$, \Cref{alg:dinkelbach} terminates in at most $M+1$ ratio updates, and returns
\[
    \sS_k^{\max} \in
    \arg\max_{|\sS|=k}
    \frac{W(\sS)}{G(\sS)}.
\]
\end{theorem}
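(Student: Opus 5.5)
We follow the classical Dinkelbach argument, specialized to a finite feasible class on which each linear subproblem is solved exactly by top-$k$ selection. Write $\mathcal{F}_k \coloneqq \{\sS\subset[n]: |\sS|=k\}$ and $\rho(\sS)\coloneqq W(\sS)/G(\sS)$, which is well defined by \Cref{ass:positive_curvature}. Define the parametric value function
\[
    F(\eta) \coloneqq \max_{\sS\in\mathcal{F}_k} \bigl\{ W(\sS) - \eta\, G(\sS) \bigr\}.
\]

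The first step records the basic properties of $F$. It is a maximum of finitely many affine functions of $\eta$, each with strictly negative slope $-G(\sS)$. Hence $F$ is convex, continuous and strictly decreasing, and it has a unique root. That root is $\eta^\star = \max_{\sS} \rho(\sS)$: for every $\sS$ we have $W(\sS)-\eta^\star G(\sS) = G(\sS)\bigl(\rho(\sS)-\eta^\star\bigr)\le 0$, with equality at any maximizer. Consequently, a set $\sS$ attains $F(\eta)=0$ with $\eta=\rho(\sS)$ if and only if $\sS$ is ratio-optimal. We also note the identity from \Cref{sec:meth}: maximizing $F_\eta$ over $\mathcal{F}_k$ is exactly selecting the top-$k$ scores $s_i(\eta)$. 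So each iteration computes $S^{(t+1)}\in\arg\max F_{\eta^{(t)}}$ exactly, and ties in the top-$k$ selection are harmless.

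The second step shows monotonicity of the iterates. Set $\eta^{(t+1)}=\rho(S^{(t+1)})$. For $t\ge1$, the set $S^{(t)}$ is feasible and satisfies $W(S^{(t)})-\eta^{(t)}G(S^{(t)})=0$, so $F(\eta^{(t)})\ge 0$. There are two cases.
\begin{itemize}
    \item If $F(\eta^{(t)})=0$, then $\eta^{(t)}=\eta^\star$ and $S^{(t)}$ is optimal.
    \item If $F(\eta^{(t)})>0$, then
    \[
        G(S^{(t+1)})\bigl(\eta^{(t+1)}-\eta^{(t)}\bigr)=F(\eta^{(t)})>0,
    \]
    so $\eta^{(t+1)}>\eta^{(t)}$ strictly.
\end{itemize}
The arbitrary seed $\eta^{(0)}\in\R$ costs one extra update: after the first update, $\eta^{(1)}\in\mathcal{R}_k$ and the argument above applies.

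The third step is the counting. The values $\eta^{(1)}<\eta^{(2)}<\cdots$ are strictly increasing elements of $\mathcal{R}_k$ until the zero-residual case occurs. At most $M$ distinct values exist, so after at most $M+1$ updates we reach $F(\eta^{(t)})=0$. At that point the current set is optimal.

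The step needing care, and the one I expect to be the main obstacle, is matching this to the stopping rule $S^{(t+1)}=S^{(t)}$ in exact arithmetic (ignoring $\tau$, or taking $\tau$ below the minimal gap in $\mathcal{R}_k$). There are two directions.
\begin{itemize}
    \item If $S^{(t+1)}=S^{(t)}$, then $F(\eta^{(t)}) = W(S^{(t)})-\eta^{(t)}G(S^{(t)})=0$, so the returned set is optimal.
    \item Conversely, once $F(\eta^{(t)})=0$, the next iterate $S^{(t+1)}$ also has residual $0$ at $\eta^\star$, so $\eta^{(t+1)}=\eta^\star$. Then $|\eta^{(t+1)}-\eta^{(t)}|=0<\tau$ triggers termination, even if the top-$k$ tie-breaking returns a different optimal set.
\end{itemize}
In either case the returned $S^{(t+1)}$ is a maximizer of $W/G$ over $\mathcal{F}_k$. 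For the tolerance branch in general, I would state exactness for $\tau$ smaller than the minimum positive gap between elements of $\mathcal{R}_k$. Under that condition, a step with $|\eta^{(t+1)}-\eta^{(t)}|<\tau$ forces equality, and by the strict increase in the second step, equality forces $F(\eta^{(t)})=0$.
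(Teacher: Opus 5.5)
Your proof is correct and follows the paper's argument. In both, the sign of $F(\eta)=\max_{|\sS|=k}\{W(\sS)-\eta G(\sS)\}$ characterizes $\eta^\star$, the first update lands in $\mathcal{R}_k$, the ratios then increase strictly until the residual vanishes, and a zero residual certifies optimality; your identity $G(S^{(t+1)})(\eta^{(t+1)}-\eta^{(t)})=F(\eta^{(t)})$ makes explicit the monotonicity that the appendix only attributes to Dinkelbach.

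One slip is in your optional tolerance remark. At $t=0$ the seed $\eta^{(0)}$ need not lie in $\mathcal{R}_k$, so taking $\tau$ below the minimal gap of $\mathcal{R}_k$ does not rule out a premature stop at a non-optimal $S^{(1)}$. A bound such as $\tau\le\bigl(\min_{|\sS|=k}G(\sS)/\max_{|\sS|=k}G(\sS)\bigr)$ times that gap does suffice, by the same identity; the paper avoids the issue by working in exact arithmetic.
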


\begin{proof}[Proof sketch]
    Let
    \(
        H(\eta) \coloneqq \max_{|\sS|=k}
            \left\{W(\sS)-\eta G(\sS)\right\}.
    \)
    Since $G(\sS)>0$, $H(\eta)$ has the sign of $\eta^\star-\eta$.
    At iteration $t$, the algorithm maximizes $H(\eta^{(t)})$ and updates to the ratio of the selected set. Hence updates from below $\eta^\star$ strictly increase the ratio, updates from above $\eta^\star$ move to a feasible ratio no larger than $\eta^\star$, and a zero residual is equivalent to optimality.
    Thus, after at most one update, the algorithm either terminates or moves strictly upward through distinct values in the finite set $\mathcal{R}_k$, terminating in at most $M$ ratio updates. See \Cref{app:finite_exact_dinkelbach}.
\end{proof}

Under absolute continuity and non-degeneracy of the realized weights and costs, ties between distinct feasible ratios occur with probability zero.
In that case, $\hat\sS_k^{\max}$ is almost surely unique.

\subsection{Statistical Validity for Partial Linear Models} \label{subsec:stats}

\Cref{alg:dinkelbach} solves the fixed-input fractional optimization problem exactly.
In a partial linear model, however, the inputs
\[
    w_i = \tilde x_i \tilde r_i,
        \qquad
    c_i = \tilde x_i^2
\]
are generated by first-stage residualization.
The statistical question is whether the empirical problem is close to the oracle problem based on the unknown residuals
\[
    v_i = x_i - h_0(Z_i),
        \qquad
    u_i = y_i - x_i \beta_0 - g_0(Z_i).
\]

For fixed $k$, the relevant first-order oracle objective is additive.
Let
\[
    \phi_i \coloneqq \frac{v_i u_i}{\mu_v}
    \quad\text{with} \quad
    \mu_v \coloneqq \E[v_i^2] > 0,
\]
and define
\[
    Q_n^{\mathrm{or}}(\sS) \coloneqq \sum_{i\in\sS} \phi_i,
    \qquad
    |\sS|=k.
\]
Let
\[
    \sS_{k}^{\mathrm{or}}
    \in
    \arg\max_{|\sS|=k} Q_n^{\mathrm{or}}(\sS)
\]
denote an oracle first-order $k$-MIS.
The empirical scaled objective is
\[
    \widehat Q_n(\sS) \coloneqq
        n\{\hat\beta-\hat\beta_{-\sS}\} =
        n \frac{
            \sum_{i\in\sS}\tilde x_i\tilde r_i
        }{
            \sum_{j\notin\sS}\tilde x_j^2
        }.
\]

For fixed $k$, define
\[
    E_{\sS} \coloneqq
        \sum_{i\in\sS}(\tilde x_i\tilde r_i-v_i u_i),
    \qquad
    A_{\sS} \coloneqq \sum_{i\in\sS} v_i u_i,
\]
and
\begin{align*}
    \delta_{n,k} \coloneqq&
        \sup_{\sS\subset \left[ n \right], |\sS|=k}|E_{\sS}|,\\
    B_{n,k} \coloneqq&
        \sup_{\sS\subset \left[ n \right], |\sS|=k}|A_{\sS}|,\\
    \rho_{n,k} \coloneqq&
    \sup_{\sS\subset \left[ n \right], |\sS|=k}
    \left|\frac{1}{n}\sum_{j\notin\sS}\tilde x_j^2-\mu_v\right|.\\
\end{align*}

\begin{assumption}[Uniform residualized-score stability] \label{ass:resid_score_stability}
For the fixed subset size $k$ under consideration,
\[
    \rho_{n,k}=o_p(1),
        \qquad
    \delta_{n,k}=o_p(1),
        \qquad
    B_{n,k}\,\rho_{n,k}=o_p(1).
\]
\end{assumption}

Primitive sufficient conditions include cross-fitting, bounded moments, non-degenerate treatment residual variance, and first-stage nuisance rates strong enough that the generated score error over fixed-size subsets is $o_p(1)$; details are given in the Appendix.

\begin{theorem}[First-order MIS validity] \label{thm:plm_selection}
Fix $k$ and suppose $\mu_v = \E[v_i^2] > 0$ and \Cref{ass:resid_score_stability} holds. Then
\[
    \sup_{\sS\subset \left[ n \right], |\sS|=k}
    \left|
        \widehat Q_n(\sS) -
        Q_n^{\mathrm{or}}(\sS)
    \right| = o_p(1).
\]
Consequently, if
\[
    \hat\sS_k^{\max}\in\arg\max_{|\sS|=k}
        \widehat Q_n(\sS),
\]
then
\[
    \left|
    \widehat Q_n(\hat\sS_k^{\max}) -
    \max_{|\sS|=k}
        Q_n^{\mathrm{or}}(\sS)
    \right| = o_p(1).
\]

If the oracle first-order maximizer $\sS_k^{\mathrm{or}}$ is unique and
\[
    \Gamma_{n,k} :=
    Q_n^{\mathrm{or}}(\sS_k^{\mathrm{or}}) -
    \max_{\substack{|\sS|=k\\\sS\ne\sS_k^{\mathrm{or}}}}
        Q_n^{\mathrm{or}}(\sS),
\]
satisfies
\[
    \frac{
        \sup_{\sS\subset \left[ n \right], |\sS|=k}
        \left|
            \widehat Q_n(\sS)-Q_n^{\mathrm{or}}(\sS)
        \right|
    }{
        \Gamma_{n,k}
    } = o_p(1),
\]
then
\[
    \Pr(\hat\sS_k^{\max}=\sS_k^{\mathrm{or}})\to1.
\]
\end{theorem}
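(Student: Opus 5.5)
The proof is a direct perturbation argument on the ratio form of \Cref{prop:dfb}, followed by two standard argmax consequences.

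\textbf{Uniform approximation.} Write $\hat G_{\sS} \coloneqq \frac{1}{n}\sum_{j\notin\sS}\tilde x_j^2$. For every $|\sS|=k$,
\[
\widehat Q_n(\sS)=\frac{A_{\sS}+E_{\sS}}{\hat G_{\sS}}
\qquad\text{and}\qquad
Q_n^{\mathrm{or}}(\sS)=\frac{A_{\sS}}{\mu_v}.
\]
Adding and subtracting $A_{\sS}/\hat G_{\sS}$ gives
\[
\widehat Q_n(\sS)-Q_n^{\mathrm{or}}(\sS)=\frac{E_{\sS}}{\hat G_{\sS}}+A_{\sS}\,\frac{\mu_v-\hat G_{\sS}}{\hat G_{\sS}\,\mu_v}.
\]
The denominator is controlled on the event $\{\rho_{n,k}\le \mu_v/2\}$, which has probability tending to one because $\rho_{n,k}=o_p(1)$. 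On that event, $\inf_{\sS}\hat G_{\sS}\ge \mu_v/2>0$; in particular \Cref{ass:positive_curvature} holds with probability tending to one, so $\widehat Q_n$ is well defined. Taking suprema over $|\sS|=k$ then yields
\[
\sup_{\sS}\bigl|\widehat Q_n(\sS)-Q_n^{\mathrm{or}}(\sS)\bigr|\le \frac{2\delta_{n,k}}{\mu_v}+\frac{2B_{n,k}\rho_{n,k}}{\mu_v^2}.
\]
Both terms are $o_p(1)$ by \Cref{ass:resid_score_stability}.

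\textbf{Value consistency.} This is the usual argmax-of-uniformly-close-functions argument on a finite index set. Let $D_n$ denote the supremum bounded above. For any two functions within sup-distance $D_n$, the maxima differ by at most $D_n$, since $|\max f-\max g|\le\sup|f-g|$. Because $\hat\sS_k^{\max}$ attains $\max\widehat Q_n$, this gives $|\widehat Q_n(\hat\sS_k^{\max})-\max Q_n^{\mathrm{or}}|\le D_n=o_p(1)$.

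\textbf{Set recovery.} Suppose some $\sS\neq\sS_k^{\mathrm{or}}$ satisfies $\widehat Q_n(\sS)\ge\widehat Q_n(\sS_k^{\mathrm{or}})$. Then
\[
Q_n^{\mathrm{or}}(\sS)+D_n\ \ge\ \widehat Q_n(\sS)\ \ge\ \widehat Q_n(\sS_k^{\mathrm{or}})\ \ge\ Q_n^{\mathrm{or}}(\sS_k^{\mathrm{or}})-D_n,
\]
so $\Gamma_{n,k}\le Q_n^{\mathrm{or}}(\sS_k^{\mathrm{or}})-Q_n^{\mathrm{or}}(\sS)\le 2D_n$. Hence
\[
\{\hat\sS_k^{\max}\ne\sS_k^{\mathrm{or}}\}\subseteq\{D_n/\Gamma_{n,k}\ge 1/2\},
\]
and the event on the right has probability $o(1)$ by the separation condition. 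This inclusion also covers the case of ties in $\arg\max\widehat Q_n$, since any tied competitor already triggers the displayed chain of inequalities.

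\textbf{Main obstacle.} The only delicate point is the denominator. The lower bound $\inf_{\sS}\hat G_{\sS}\ge\mu_v/2$ must hold uniformly over $\sS$ and on a high-probability event, and the cross term $A_{\sS}(\mu_v-\hat G_{\sS})$ must be handled through the product condition $B_{n,k}\rho_{n,k}=o_p(1)$ rather than by bounding $B_{n,k}$ separately. This matters because $B_{n,k}$ may diverge, for example as a maximum of $k$ heavy-tailed scores. Everything else is elementary.
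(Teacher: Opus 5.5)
Your proposal is correct and follows essentially the same route as the paper. It uses the identical add-and-subtract decomposition of $\widehat Q_n(\sS)-Q_n^{\mathrm{or}}(\sS)$, bounds the denominator below by $\mu_v/2$ on the event $\{\rho_{n,k}\le\mu_v/2\}$, and closes with the same gap argument for set recovery. The one minor difference is in value consistency: you use $|\max f-\max g|\le\sup|f-g|$, which bounds exactly the stated quantity $\widehat Q_n(\hat\sS_k^{\max})$, whereas the paper's Step~2 bounds the oracle value $Q_n^{\mathrm{or}}(\hat\sS_k)$ of the empirical maximizer, which is what its Step~3 then uses.
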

\begin{proof}[Proof sketch]
Write $D_{\sS} = \sum_{j\notin\sS}\tilde x_j^2$ and $d_{\sS}=D_{\sS}/n-\mu_v.$
Then
\[
    \widehat Q_n(\sS) =
    \frac{A_{\sS}+E_{\sS}}{\mu_v+d_{\sS}},
    \qquad
    Q_n^{\mathrm{or}}(\sS)=\frac{A_{\sS}}{\mu_v}.
\]
On the event $\rho_{n,k}<\mu_v/2$, uniformly over $|\sS|=k$,
\[
\begin{aligned}
\left|
    \widehat Q_n(\sS)-Q_n^{\mathrm{or}}(\sS)
\right|
&\le
    \frac{2}{\mu_v}|E_{\sS}|
    +
    \frac{2}{\mu_v^2}|A_{\sS}|\,|d_{\sS}|  \\
&\le
    \frac{2}{\mu_v}\delta_{n,k}
    +
    \frac{2}{\mu_v^2}B_{n,k}\rho_{n,k}.
\end{aligned}
\]
The right-hand side is $o_p(1)$ by \Cref{ass:resid_score_stability}.
The value and selection claims follow from evaluating the uniform approximation at both the empirical and the oracle maximizer, using the gap condition for exact set recovery. See \Cref{app:proof_plm_selection} for the full proof.
\end{proof}

\begin{remark}[Scaling]
For fixed $k$, $\hat\beta-\hat\beta_{-\sS}=O_p(n^{-1})$, so the meaningful first-order object is the scaled influence $n(\hat\beta-\hat\beta_{-\sS})$. The limiting ranking is governed by the orthogonal scores $v_i u_i/\E[v_i^2]$.
\end{remark}

\begin{remark}[Value versus set recovery]
Value consistency is weaker and more stable than exact recovery of the selected set. If two oracle sets have nearly equal first-order values, small first-stage or sampling errors can change the identity of the selected set while leaving the maximum influence value essentially unchanged.
\end{remark}

\begin{remark}[Relation to orthogonal influence functions]
For fixed $k$, the PLM $k$-MIS is asymptotically governed by the orthogonal scores
\[
    \phi_i = \frac{v_i u_i}{\E[v_i^2]}.
\]
The finite-sample objective, however, remains linear-fractional because the denominator $\sum_{j\notin\sS} \tilde x_j^2$ varies with the removed set.
Thus denominator effects, masking, and non-nestedness are finite-sample phenomena that vanish only at the first-order fixed-$k$ asymptotic scale.
\end{remark}

\section{Experiments} \label{sec:exp}

We evaluate the method in simulations and applications.
The simulations verify exact optimization against enumeration in small problems, and they measure runtime and iteration counts at scales where enumeration is impossible.
The applications then use MIS traces as diagnostics in randomized experiments, linearized prediction tasks, and benchmark regression problems.

Across settings, we compute MIS in both signed directions, identifying sets that maximally increase and maximally decrease the target.
We summarize results with \emph{influence traces}, plotting the optimal influence value as a function of $k$.
We also track non-nestedness events, $\hat\sS_k \not\subset \hat\sS_{k+1}$, and set overlap using the Jaccard index
\(
    J(\sA,\sB) \coloneqq |\sA\cap\sB|/|\sA\cup\sB|.
\)

\subsection{Simulations}
We use simulations for two main purposes: (i) to validate theoretical predictions and (ii) to characterize runtime and iteration counts at scale.

\paragraph{Optimization exactness}
To validate \Cref{thm:exact_convergence}, we compare \Cref{alg:dinkelbach} with enumeration and greedy baselines in problems where exhaustive search is feasible ($n \leqslant 50$, $k \leqslant 3$).
We use data-generating processes (DGPs) designed to stress the selection problem, including heteroskedasticity, heavy tails, mixtures, nonlinearities, endogeneity, autocorrelated errors, and masking.
Across more than 10{,}000 Monte Carlo replications per design, \Cref{alg:dinkelbach} always matches the enumerated optimum, while greedy selection sometimes returns suboptimal sets.

\subsubsection{Residualized Performance}
We next assess the residualized setting motivated by \Cref{thm:plm_selection}.
The simulation design contains two seeded influential groups of three observations each, built-in masking \citep[following][]{kuschnig_hidden_2021}, and nonparametric confounding.
We vary $n \in \{1000, 5000, 10000\}$, estimate the nuisance functions with gradient boosting and $5$-fold cross-fitting, and compare the recovered empirical MIS with the known oracle-style score benchmark.
Details are given in Appendix~\ref{app:add_results}.

The residualized influence values track the oracle values closely, even at moderate sample sizes.
Set recovery is more demanding; strongly influential sets are recovered reliably, whereas weakly separated or moderately influential sets converge more slowly.
This matches \Cref{thm:plm_selection} --- value approximation is stable under weaker conditions than exact recovery of the maximizing set.

\begin{figure}[htpb]
    \centering
    \includegraphics[width=\linewidth]{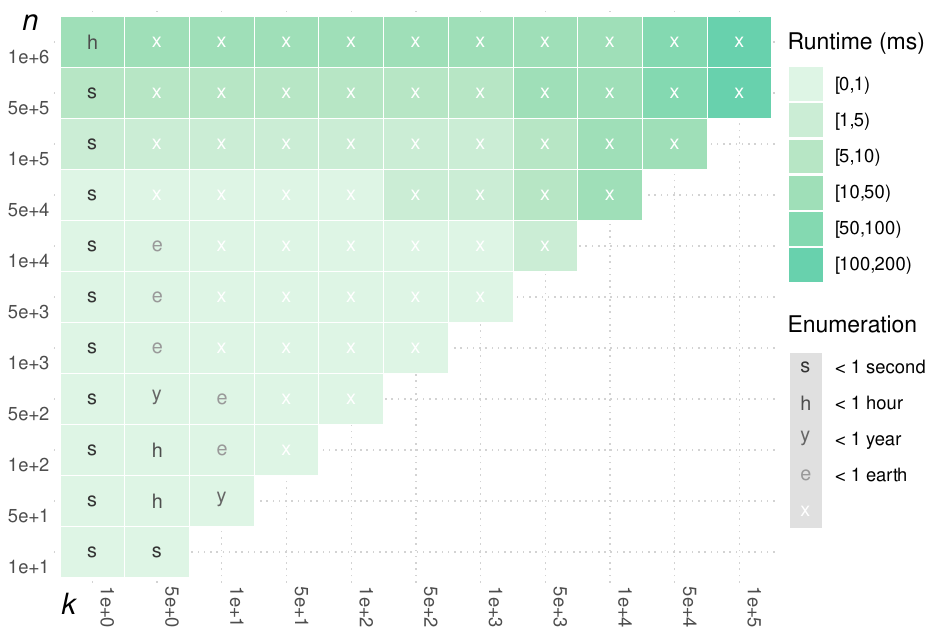}
    \caption{
        Runtimes of \Cref{alg:dinkelbach} in milliseconds (median over 100 runs) for scenarios up to $n = 10^6$ and $k = 10^5$ (colored tiles), along with an enumeration feasibility frontier (markers) based on $\binom{n}{k}$ candidate sets under an optimistic per-set evaluation cost.
    } \label{fig:runtime}
\end{figure}

\subsubsection{Runtime and Feasibility}
We benchmark an implementation of \Cref{alg:dinkelbach} on synthetic univariate regressions across a grid of sample sizes $n$ and set sizes $k$, recording wall-clock time (across 100 runs) and the number of Dinkelbach iterations.
As shown in \Cref{fig:runtime}, the median wall-clock time for $n = 10^6, k = 10^5$ remains below $200$ ms.\footnote{
    The implementation uses a size-$k$ heap for ordering, and is implemented in \textsf{R} and \textsf{Rcpp} \citep{eddelbuettel2011rcpp}, on a Ryzen AI Max+ Pro 395, using a single thread.
    We also benchmarked an implementation of the greedy algorithm by \citet{kuschnig_hidden_2021} on a subset of the grid (a runtime of $200$ ms is achieved for $n = 10^4$ and $k = 50$), as well as the approximations implemented by \citet{broderick_automatic_2021}; across a range of settings for $(n, k)$, our method is typically faster to compute, often by an order of magnitude.
}
Across the entire grid, \Cref{alg:dinkelbach} converged in a median of three iterations, with a maximum of six.

To stress-test convergence, we further scale the implementation and use adversarial warm starts.
At $n=10^8$, runtime is roughly five seconds for $k=10^6$ and 60 seconds for $k=10^7$.
At $n=10^9$, where memory becomes limiting, runtime is about ten seconds for $k=10^6$, 80 seconds for $k=10^7$, and fifteen minutes for $k=10^8$.
For the $n=10^8,k=10^7$ case, replacing the warm start with $\eta^{(0)}\in\{10^3,10^6,10^9,10^{12},10^{18}\}$ delays convergence by only one iteration, from five to six.

\subsection{Applications} \label{sec:applications}

We use the method as a set-level sensitivity diagnostic in three classes of applications.
In randomized experiments, the target is a fixed residualized univariate ratio and the algorithm identifies exact empirical MIS.
In linearized prediction of semantic similarity, MIS quantify which training subsets most affect a local scalar prediction.
In observational benchmark regressions, the PLM theory gives a first-order oracle interpretation when residualized-score stability is credible.
In all applications we compute signed MIS in both directions.

\subsubsection{Randomized Controlled Trials}

\begin{figure}[htbp]
    \centering
    \includegraphics[width=.9\linewidth]{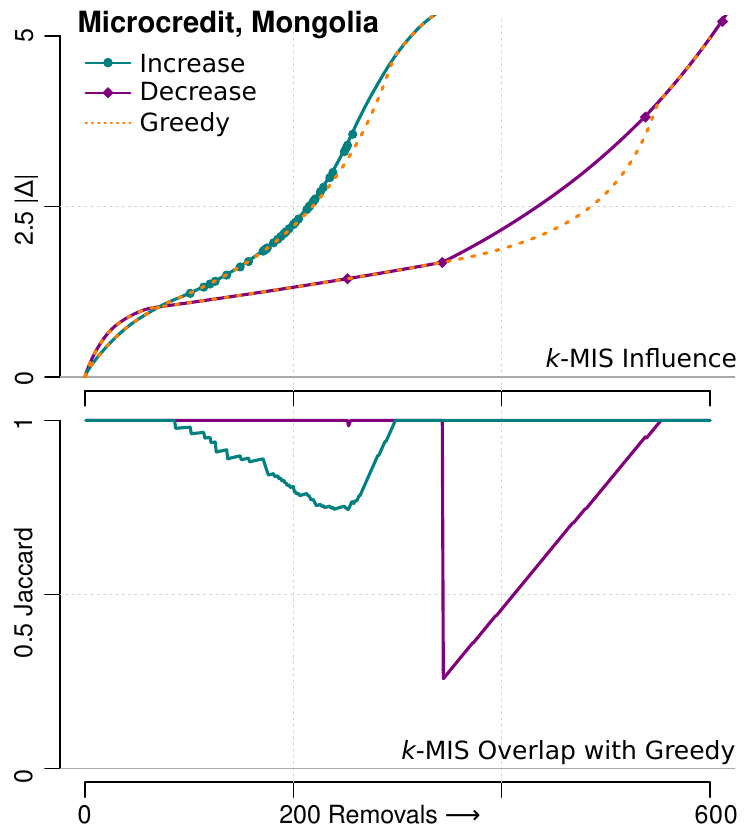}
    \caption{
        MIS impacts for a microcredit RCT conducted in Mongolia \citep{attanasio_impacts_2015, meager_understanding_2019}.
        Lines track $\lvert \Delta_k \rvert$ for the $k$-MIS that increase (teal) and decrease (purple) the ATE; the greedy approximation is overlaid as a dashed (orange) line. Points mark non-nestedness events. The bottom panel tracks the similarity of the $k$-MIS with its greedy approximation.
    }
    \label{fig:microcredit_MON}
\end{figure}

We revisit the seven microcredit RCTs studied by \citet{meager_understanding_2019} and \citet{broderick_automatic_2021}.
These trials are useful stress tests because average treatment effect (ATE) estimates can be sensitive to small influential subsets --- in two trials removing a single observation flips the sign of the estimated ATE, and sets of size 15 suffice across all trials.
Previous work has used approximations, as enumeration is infeasible at the smallest size ($n \geqslant 961$).
Our algorithm traces exact $k$-MIS over the full range of $k$ at negligible computational cost.

\Cref{fig:microcredit_MON} shows the Mongolia trial of \citet{attanasio_impacts_2015}.
The increasing and decreasing traces differ substantially.
In the increasing direction, non-nestedness is frequent but mostly local: the greedy trace remains close in value even as set overlap declines.
In the decreasing direction, one non-nestedness event causes a large and persistent shift in the optimal set.
Because greedy selection cannot revise earlier choices, it remains trapped on a suboptimal path and the influence gap accumulates.

\begin{figure}[htbp]
    \centering
    \includegraphics[width=.9\linewidth]{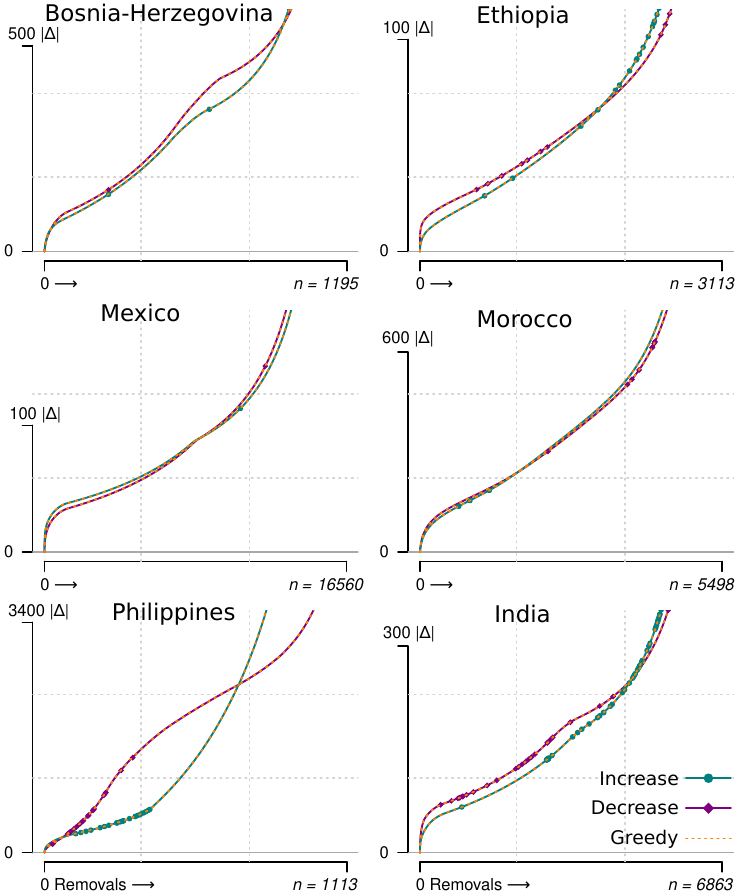}
    \caption{
            MIS impacts across the remaining six microcredit RCTs \citep[see][]{meager_understanding_2019}.
            Lines show $\lvert \Delta_k \rvert$ for exact $k$-MIS (increase/decrease) and greedy; points mark non-nestedness events.
        }
    \label{fig:microcredit_panel}
\end{figure}

\Cref{fig:microcredit_panel} reports the remaining six trials.
Non-nestedness is common across trials, confirming that optimal MIS paths are generally not nested in $k$.
Greedy selection often approximates influence magnitudes well, but it need not recover the maximizing sets.
The Philippines trial features the clearest asymmetry --- the increasing trace is initially stable and later accelerates, while the decreasing trace follows the opposite pattern.
These results suggest that value approximation and set recovery should be evaluated separately, consistent with \Cref{thm:plm_selection}.

\subsubsection{Linearized Text Embeddings}

Next, we use MIS for a linearized prediction task.
We predict semantic similarity for $n=5{,}749$ English sentence pairs from the STS benchmark \citep{cer-etal-2017-semeval}.
Each pair is mapped to frozen sentence embeddings using a pre-trained transformer model \citep{song2020mpnet}, and the scalar regressor is the cosine similarity between the two embeddings.
For three query pairs with low, medium, and high similarity, we study the local prediction target $\phi(\beta) = x_0 \beta$.

\begin{table}[htbp]
\centering
    \caption{
        Sensitivity of fitted similarity scores to MIS.
    } \label{tab:emb_sens}
    \begin{tabular}{lrrrrrr}
    \toprule
    Query & $x_i$ & $y_i$ & $\hat y_{i_0}$ & Min & Max & $> 1$ \\
    \midrule
    Outside & 0.05 & 0.00 &  0.04 & 0.07 & 0.02 & -- \\
    ImClone & 0.67 & 0.54 &  0.58 & 1.02 & 0.30 & 4211 \\
    Romney & 0.94 & 1.00 & 0.81 & 1.43 & 0.41 & 2545 \\
    \bottomrule
    \end{tabular}
\end{table}

\Cref{tab:emb_sens} reports the full-sample fitted score, the fitted scores after removing the $5{,}000$-MIS in each direction, and the smallest $k$ for which the fitted score exceeds $1.0$ for three specific sentences (with minimal, average, and maximal similarity).
Because the target scales with $x_0$, high-similarity query pairs are mechanically more sensitive to removal of influential training subsets.
The traces reveal substantial sensitivity for the medium- and high-similarity pairs, including cases where deleting a subset pushes the linearized score outside the nominal $[0,1]$ range.
Further details are provided in Appendix~\ref{app:add_results}.

\subsubsection{Additional Applications}

We also apply the method to benchmark datasets from economics, biology, statistics, and machine learning; details are in Appendix~\ref{app:add_results}.
Across these datasets, greedy selection usually tracks the optimal influence value closely, but exact and greedy sets often differ.
Most non-nestedness events are short-lived, in contrast to the persistent failure seen in the Mongolia trial.
Warm-starting along the MIS path in \Cref{alg:dinkelbach-multi} keeps iteration counts small, with at most four iterations in these applications.
Runtime is dominated by fitting the original model rather than by MIS selection.

\section{Discussion} \label{sec:disc}

This paper makes most influential set analysis computationally explicit for a broad class of fixed-input univariate objectives.
The key reduction is algebraic --- once leave-set-out influence is written as a score-over-curvature ratio, the search over $\binom{n}{k}$ subsets reduces to repeated top-$k$ selection along a one-dimensional parameter path.
For the fixed residualized inputs covered by \Cref{thm:exact_convergence}, the result is the exact global optimum, not a greedy or local approximation.

The statistical interpretation depends on how those inputs are generated.
In randomized experiments and other fixed-input settings, the empirical MIS is the target.
In partial linear models, \Cref{thm:plm_selection} shows that the scaled residualized objective has a first-order oracle interpretation when residualized scores and denominators are stable.
This distinction is important: orthogonality supports value stability, but exact recovery of the selected set also requires separation between the best and second-best oracle sets.

\subsection{Practical Use}

We recommend reporting signed MIS traces over a range of $k$.
The increasing and decreasing traces answer different robustness questions and can implicate different subsets.
Large early jumps indicate sensitivity to few observations; gradual traces suggest dispersed influence; sharp composition changes signal non-nestedness and potential failure of greedy approximations.
The implicated sets should then be inspected directly, especially when they contain duplicated records, unusual covariate patterns, high-leverage observations, or observations tied to a common source.

\paragraph{Robustness Auditing}
MIS traces provide target-specific worst-case deletion diagnostics.
Compared with singleton influence measures, they reveal whether sensitivity is concentrated in one record or distributed across interacting observations.
Compared with ad-hoc deletion checks, each set is the most adverse set of its size for the chosen target.
Compared with greedy approximations, the exact trace is not constrained to follow a nested deletion path.
Our experiments show that greedy methods may approximate the influence value while missing the maximizing set, and in some cases a single non-nestedness event produces persistent error.

\paragraph{Data Curation}
MIS can guide preprocessing and data cleaning without relying on blunt transformations such as trimming or winsorization.
The selected sets identify which observations actually drive the downstream estimate or prediction.
This can prioritize manual review, reveal duplicated or contaminated records, and document whether proposed cleaning rules address the relevant sources of sensitivity.

\paragraph{Prediction Diagnostics}
For linearized prediction targets, MIS provide dataset-level explanations that complement feature-level attributions.
A large singleton effect indicates that one example matters on its own.
A large set effect indicates that the prediction is anchored by a subset whose collective role may not be visible from individual records.
This is useful for identifying duplicated, highly similar, or mutually masking examples that jointly support a prediction.

\subsection{Limitations}

The computational reduction requires a linear-fractional leave-set-out objective.
This covers residualized univariate least-squares targets and related affine functionals, but non-linear estimands, vector-valued targets, and general $M$-estimators may require approximation, local linearization, or different optimization subproblems.

The PLM result is also deliberately first-order.
For fixed $k$, the scaled residualized objective is close to an oracle orthogonal-score objective under residualized-score stability.
This does not imply that arbitrary first-stage learners produce identical MIS, nor that exact set recovery is guaranteed without a gap.
When first-stage learners are unstable for high-leverage or extreme-score observations, the recovered set remains the exact empirical MIS for the residualized regression but may not estimate the oracle first-order MIS.

Finally, exact MIS are diagnostic rather than prescriptive.
A highly influential set may reveal contamination, model misspecification, subgroup heterogeneity, or genuine scientific signal.
The method identifies the subset responsible for sensitivity; substantive interpretation still requires domain knowledge.

\subsection{Extensions}

Several extensions are natural.
Ridge-stabilized denominators preserve the same basic reduction and can improve behavior when residual curvature is small.
For broader classes of smooth estimators, approximate score-over-curvature representations may yield local or asymptotic analogues of the present algorithm.
Structured versions of MIS are also promising; replacing the top-$k$ step with a constrained selection subproblem would allow influential clusters, time-contiguous blocks, group-balanced removals, or stratified deletion rules.

A second direction is inference on the traces themselves.
Exact MIS computation makes it feasible to separate ordinary sampling variability from unusually influential subsets, and can provide candidate sets for formal tests of excessive influence \citep{konrad_testing_2025}.
This shifts the next question from how to find influential sets to which influential sets are substantively or statistically anomalous.

\subsection{Broader Implications}

Efficient MIS computation turns set-level sensitivity analysis from a bespoke forensic exercise into a routine diagnostic.
The method complements robust estimation, which limits the effect of contamination; meanwhile MIS identify which observations are responsible for sensitivity and how that sensitivity accumulates with set size.
In high-stakes empirical and machine-learning pipelines, such inspectable counterfactual subsets can improve transparency, support data provenance, and clarify the relationship between data and conclusions.

\section{Conclusion} \label{sec:concl}

We introduced an exact algorithm for identifying most influential sets when leave-set-out effects have a linear-fractional representation. 
The method reduces the combinatorial search over $\binom{n}{k}$ subsets to a finite sequence of top-$k$ selection problems, with $\mathcal{O}(n)$ cost per iteration and few iterations in practice.

The guarantees separate computation from statistical interpretation.
For fixed residualized inputs, the algorithm returns a globally optimal empirical MIS.
For partial linear models, the scaled residualized influence objective is first-order equivalent to an oracle orthogonal-score objective under residualized-score stability; value consistency follows directly, and exact set recovery follows under a separation condition.

Empirically, the method scales to datasets far beyond the reach of enumeration.
Across randomized experiments, semantic similarity prediction, and benchmark regressions, exact MIS traces reveal frequent non-nestedness and show where greedy approximations succeed or fail.
These traces provide actionable diagnostics for robustness auditing, data curation, and prediction analysis.
Future work can extend the framework to structured deletion constraints, vector-valued or nonlinear targets, and formal inference for anomalous influence.

\section*{Impact Statement}

This paper develops methods for identifying most influential sets, whose removal induces the greatest change in a target quantity of interest. The primary goal is to improve robustness auditing and clarify how empirical conclusions depend on specific subsets of data.

Potential positive impacts include improved accountability, data provenance, and reliability.
By identifying the subsets that drive estimates or predictions, the method can help practitioners detect contamination, duplicated records, unstable estimates, subgroup dependence, or other sources of sensitivity.
It can also guide targeted data cleaning and make robustness analyses more reproducible.

A potential risk is adversarial use. An actor could use influence information to identify records whose inclusion or removal would steer a model or estimate in a desired direction.
We view transparency as the appropriate mitigation.
Set-level sensitivity already exists whether or not it is measured; efficient diagnostics make these dependencies visible, auditable, and easier to report.

The computational footprint is modest relative to enumeration.
In our benchmarks, the method scales to millions of observations on a single thread, and the runtime is typically dominated by fitting the original model rather than by MIS selection.

\bibliography{refs}
\bibliographystyle{icml/icml2026}


\clearpage
\appendix

\appendix

\setcounter{equation}{0}
\setcounter{figure}{0}
\setcounter{table}{0}

\renewcommand{\theequation}{\thesection.\arabic{equation}}
\renewcommand{\thefigure}{A\arabic{figure}}
\renewcommand{\thetable}{A\arabic{table}}
\renewcommand{\thesection}{A\arabic{section}}


\section{Proof of \Cref{thm:exact_convergence}} \label{app:finite_exact_dinkelbach}

\paragraph{Problem Formulation}

Let $n \in \mathbb{N}$ and $k \in \{1, \ldots, n\}$. Given weights $w_i, c_i \in \mathbb{R}$ for $i = 1, \ldots, n$ and $T \in \mathbb{R}$ such that 
\begin{equation*}
T - \sum_{i \in \sS} c_i > 0 \quad \text{for all } \sS \subset [n] \text{ with } |\sS| = k,
\end{equation*}
we consider the fractional programming problem:
\begin{equation}
\label{eq:main_problem}
\max_{\sS \subset [n], \, |\sS| = k} \frac{W(\sS)}{G(\sS)},
\end{equation}
where $W(\sS) \coloneqq \sum_{i \in \sS} w_i$ and $G(\sS) \coloneqq T - \sum_{i \in \sS} c_i$.

\subsection*{Dinkelbach's Method}

For $\eta \in \mathbb{R}$, define the parametric objective function:
\begin{equation}
\label{eq:parametric}
F_\eta(\sS) \coloneqq W(\sS) - \eta \, G(\sS) = \sum_{i \in \sS}(w_i + \eta c_i) - \eta T.
\end{equation}
For each $\eta \in \mathbb R$, define
\[
    z(\eta) \coloneqq
    \max_{\sS\subset[n],\,|\sS|=k}
    \{W(\sS)-\eta G(\sS)\}.
\]
Because $G(\sS) > 0$ on the feasible class,
$\eta^\star =
    \max_{|\sS|=k}\frac{W(\sS)}{G(\sS)}$
if and only if $z(\eta^\star)=0$.
Moreover, $z(\eta)>0$ for $\eta<\eta^\star$ and
$z(\eta)<0$ for $\eta>\eta^\star$.

The key observation \citep{dinkelbach1967, schaible1976dinkelbach} is that solving the fractional program \eqref{eq:main_problem} is equivalent to finding the parameter $\eta^*$ such that $z(\eta^*) = 0$.\\

\begin{theorem*}[Finite Exact MIS Selection]
Fix $n \in \mathbb{N}$, $k \in \{1, \dots, n - 1\}$, and assume \Cref{ass:positive_curvature} holds. Let
\[
    \eta^\star =
    \max_{|\sS|=k}\frac{W(\sS)}{G(\sS)}
\]
and let
\[
    \mathcal{R}_k =
    \left\{
        \frac{W(\sS)}{G(\sS)} :
        \sS\subset[n],\ |\sS| = k
    \right\}.
\]
Set $M = |\mathcal{R}_k|$. With exact top-$k$ selection and exact arithmetic,
\Cref{alg:dinkelbach} terminates finitely from any $\eta^{(0)} \in \mathbb R$ and returns
\[
    \sS_k^{\max}
    \in
    \arg\max_{|\sS|=k}
    \frac{W(\sS)}{G(\sS)}.
\]
If $\eta^{(0)} \le \eta^\star$, termination occurs in at most $M$ ratio updates; for arbitrary $\eta^{(0)}$, termination occurs in at most $M+1$ ratio updates.
\end{theorem*}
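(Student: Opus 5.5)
The proof follows the standard Dinkelbach argument, adapted to the finite feasible class $\{\sS : |\sS| = k\}$ and to the stopping rule in \Cref{alg:dinkelbach}.

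\emph{Step 1: properties of $z$.} The function $z(\eta)$ is a maximum of finitely many affine functions $\eta \mapsto W(\sS) - \eta G(\sS)$. Each has slope $-G(\sS) < 0$ by \Cref{ass:positive_curvature}. Hence $z$ is convex, continuous and strictly decreasing. For any $\sS$, $W(\sS) - \eta G(\sS) \ge 0$ holds iff $W(\sS)/G(\sS) \ge \eta$. This gives $z(\eta) > 0$ for $\eta < \eta^\star$, $z(\eta^\star) = 0$, and $z(\eta) < 0$ for $\eta > \eta^\star$. Moreover, any $\sS$ attaining $z(\eta^\star) = 0$ satisfies $W(\sS)/G(\sS) = \eta^\star$, so it is optimal.

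\emph{Step 2: one iteration.} Write $S^{(t+1)}$ for the top-$k$ set at $\eta^{(t)}$ and $\eta^{(t+1)} = W(S^{(t+1)})/G(S^{(t+1)}) \in \mathcal{R}_k$. By optimality of top-$k$ selection, $S^{(t+1)}$ maximizes $F_{\eta^{(t)}}$, so
\[
    z(\eta^{(t)}) = G(S^{(t+1)})\bigl(\eta^{(t+1)} - \eta^{(t)}\bigr).
\]
Since $G > 0$, $\operatorname{sign}(\eta^{(t+1)} - \eta^{(t)}) = \operatorname{sign}(\eta^\star - \eta^{(t)})$. There are three cases.
\begin{itemize}
    \item If $\eta^{(t)} < \eta^\star$, then $\eta^{(t)} < \eta^{(t+1)} \le \eta^\star$. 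The ratio strictly increases and never overshoots, because every element of $\mathcal{R}_k$ is at most $\eta^\star$.
    \item If $\eta^{(t)} > \eta^\star$, only possible at $t = 0$, then $\eta^{(1)} \le \eta^\star$ because it is a feasible ratio.
    \item If $\eta^{(t)} = \eta^\star$, then $z = 0$, $S^{(t+1)}$ is optimal, and $\eta^{(t+1)} = \eta^\star$.
\end{itemize}

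\emph{Step 3: counting.} After the first update every iterate lies in $\mathcal{R}_k$ and is at most $\eta^\star$. The sequence then strictly increases until it equals $\eta^\star$, so it visits distinct elements of $\mathcal{R}_k$. It can take at most $M$ distinct values, so $\eta^\star$ is reached within $M$ updates if $\eta^{(0)} \le \eta^\star$, and within $M+1$ in general (the extra update accounts for an initial overshoot). Once $\eta^{(t)} = \eta^\star$, the next update returns $\eta^{(t+1)} = \eta^\star$, so $|\eta^{(t+1)} - \eta^{(t)}| = 0 < \tau$ for any $\tau > 0$ and the algorithm stops.

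\emph{Step 4: correctness of stopping.} The algorithm halts early only if $S^{(t+1)} = S^{(t)}$ or $\eta^{(t+1)} = \eta^{(t)}$ (reading the tolerance test as exact equality in exact arithmetic).
\begin{itemize}
    \item If $S^{(t+1)} = S^{(t)}$ with $t \ge 1$, then $\eta^{(t)}$ is the ratio of $S^{(t+1)}$, so $z(\eta^{(t)}) = F_{\eta^{(t)}}(S^{(t+1)}) = 0$ and optimality follows from Step 1.
    \item If $\eta^{(t+1)} = \eta^{(t)}$, then $z(\eta^{(t)}) = 0$ by the identity in Step 2, and again the returned set is optimal.
\end{itemize}

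The main subtlety is the $t = 0$ edge case. With $S^{(0)} = \varnothing$ and $k \ge 1$, the condition $S^{(1)} = S^{(0)}$ cannot occur. With a user-supplied seed, however, $\eta^{(0)}$ need not be the ratio of $S^{(0)}$. I would handle this by stipulating that the set-equality test is used only for $t \ge 1$, or that $\eta^{(0)}$ is consistent with $S^{(0)}$. Ties in top-$k$ selection are harmless, since any maximizer of $F_\eta$ suffices for the argument.
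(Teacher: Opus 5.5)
Your proof is correct and follows the same Dinkelbach argument as the paper: the sign of $z$, strict ascent through the finite set $\mathcal{R}_k$ after at most one overshooting step, and a zero residual as the optimality certificate. It is more self-contained than the appendix proof. You derive $z(\eta^{(t)})=G(S^{(t+1)})(\eta^{(t+1)}-\eta^{(t)})$ instead of appealing to Dinkelbach's analysis. You treat the overshoot that the appendix's ``monotone'' lemma glosses over. You also verify both stopping tests of \Cref{alg:dinkelbach}, including the seed edge case.

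One bookkeeping correction to Step~3. Because $\eta^{(1)}\in\mathcal{R}_k$ for every seed, the overshoot costs no extra update: $\eta^\star$ is reached within $M$ updates from any $\eta^{(0)}$, and the stopping test fires one update later. As written, your ``$M+1$ in general'' for reaching $\eta^\star$, plus the confirming update, would only give $M+2$. The clean conclusion is termination within $M+1$ updates for every seed, which is the main-text bound.

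The appendix's sharper claim, termination within $M$ updates when $\eta^{(0)}\le\eta^\star$, holds only if one counts parameter-changing updates. For example, with all $w_i=0$ one has $M=1$, yet the seed $\eta^{(0)}=-1$ takes two updates. So you were right to claim only that $\eta^\star$ is \emph{reached} within $M$, not that the algorithm terminates within $M$.
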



\paragraph{Proof}

\begin{lemma}[Distinctness of Ratio Values]
\label{lem:finiteness}
The number of distinct ratio values $N = |\{\Delta(\sS) : \sS \subset [n], \, |\sS| = k\}|$ is finite with $N \leq \binom{n}{k}$.
\end{lemma}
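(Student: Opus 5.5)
The claim is a counting statement, so the plan is to bound the image of a map defined on a finite domain. Let $\mathcal{F}_k \coloneqq \{\sS \subset [n] : |\sS| = k\}$ denote the feasible class. It has exactly $\binom{n}{k}$ elements, since choosing a size-$k$ subset of an $n$-element index set is the defining count of the binomial coefficient. In particular, $\mathcal{F}_k$ is finite.

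Next, I check that the ratio is a well-defined real-valued function on this class. By \Cref{ass:positive_curvature}, or equivalently the standing hypothesis $T - \sum_{i\in\sS} c_i > 0$ in the problem formulation, the denominator satisfies $G(\sS) > 0$ for every $\sS \in \mathcal{F}_k$. The numerator $W(\sS)$ is a finite sum of real numbers. Hence the map
\[
\Delta : \mathcal{F}_k \to \R, \qquad \Delta(\sS) = \frac{W(\sS)}{G(\sS)}
\]
is well defined, and by \Cref{prop:dfb} it coincides with the leave-set-out influence $\hat\beta - \hat\beta_{-\sS}$ (with $a=1$).

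The set of distinct ratio values is exactly the image $\Delta(\mathcal{F}_k)$. For any function, the image of a finite set has cardinality at most that of the domain, because the function maps the domain onto its image. Therefore
\[
N = |\Delta(\mathcal{F}_k)| \le |\mathcal{F}_k| = \binom{n}{k} < \infty.
\]
Equality holds precisely when $\Delta$ is injective, for instance in the generic no-ties case mentioned after \Cref{thm:exact_convergence}. Strict inequality occurs whenever two distinct sets share a ratio value.

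I do not expect any step to be a real obstacle. The only point that needs care is well-definedness. Without the positivity assumption some ratios could be undefined, and "the number of distinct values" would not make sense. That is why I invoke \Cref{ass:positive_curvature} before taking the image. The lemma is then used in the main proof to identify $N$ with $M = |\mathcal{R}_k|$, which bounds the length of any strictly increasing sequence of Dinkelbach iterates.
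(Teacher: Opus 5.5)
Your proposal is correct and follows the paper's argument: there are $\binom{n}{k}$ feasible sets, positivity of $G(\sS)$ makes each ratio well defined, and so there are at most $\binom{n}{k}$ distinct values. Your added remarks, on equality under injectivity and on the role of the bound in the termination count, are accurate but not needed for the lemma itself.
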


\begin{proof}
Since there are $\binom{n}{k}$ selections and each has a well-defined ratio $\Delta(\sS) = \frac{W(\sS)}{G(\sS)}$ (with $G(\sS) > 0$), there are at most $\binom{n}{k}$ distinct ratio values.
\end{proof}

\begin{lemma}[Monotone Progression Through Ratio Values]
\label{lem:monotonicity}
The parameter sequence $\{\eta_\ell\}$ is monotone (strictly toward the optimum). Consequently, each distinct ratio value is visited by the parameter at most once. The algorithm does not cycle.
\end{lemma}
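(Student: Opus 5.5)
The plan is to study the parameter sequence through the function $z(\eta)=\max_{|\sS|=k}\{W(\sS)-\eta G(\sS)\}$, using the sign properties already recorded: $z(\eta)>0$ for $\eta<\eta^\star$, $z(\eta)<0$ for $\eta>\eta^\star$, and $z(\eta^\star)=0$. Write $S_{\ell+1}$ for the set returned by top-$k$ selection at parameter $\eta_\ell$. Then $\eta_{\ell+1}=W(S_{\ell+1})/G(S_{\ell+1})$ and
\[
  z(\eta_\ell)=W(S_{\ell+1})-\eta_\ell G(S_{\ell+1})=G(S_{\ell+1})\,(\eta_{\ell+1}-\eta_\ell).
\]
The whole argument rests on this identity. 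Because $G(S_{\ell+1})>0$ by \Cref{ass:positive_curvature}, the step $\eta_{\ell+1}-\eta_\ell$ has the same sign as $z(\eta_\ell)$.

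I would then split into three cases according to the position of $\eta_\ell$ relative to $\eta^\star$.
\begin{itemize}
\item If $\eta_\ell<\eta^\star$, then $z(\eta_\ell)>0$, so $\eta_{\ell+1}>\eta_\ell$. Since $\eta_{\ell+1}$ is the ratio of a feasible set, it also satisfies $\eta_{\ell+1}\le\eta^\star$. The iterates therefore stay in $(-\infty,\eta^\star]$ and increase strictly.
\item If $\eta_\ell>\eta^\star$ (this can only happen at $\ell=0$), then $z(\eta_\ell)<0$. The next iterate $\eta_1$ is a feasible ratio, so $\eta_1\le\eta^\star$, and the previous case applies from then on.
\item If $\eta_\ell=\eta^\star$, then $z=0$ and $\eta_{\ell+1}=\eta_\ell$, so the parameter is fixed.
\end{itemize}
Hence from index $1$ onward (from index $0$ if $\eta_0\le\eta^\star$) the sequence is nondecreasing, and it increases strictly until it reaches $\eta^\star$. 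Monotonicity toward the optimum follows. Every $\eta_\ell$ with $\ell\ge1$ lies in the finite set $\mathcal{R}_k$, so no value of $\mathcal{R}_k$ can appear twice before the fixed point, and the sequence cannot cycle.

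The step I expect to need the most care is ruling out a stall: the parameter might stop moving even though $\eta_\ell<\eta^\star$, or the stopping rule $S^{(t+1)}=S^{(t)}$ might fire early. The first cannot happen, because strict positivity of $z$ below $\eta^\star$ forces a strict increase. For the second, if the selected set repeats, then $\eta_{\ell+1}$ equals the ratio of $S_\ell$, which is $\eta_\ell$. That gives $z(\eta_\ell)=0$, so $\eta_\ell=\eta^\star$ and the returned set attains the maximum. The same holds for the tolerance rule under exact arithmetic with $\tau$ taken below the minimal gap between elements of $\mathcal{R}_k$.

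Counting updates then gives the bounds. A strictly increasing walk through $\mathcal{R}_k$ visits at most $M$ values, with $M\le\binom{n}{k}$ by \Cref{lem:finiteness}. This gives at most $M$ updates when $\eta_0\le\eta^\star$, plus one extra initial update when $\eta_0>\eta^\star$, for at most $M+1$ in total.
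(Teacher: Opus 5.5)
Your proof is correct and is the same Dinkelbach sign argument the paper relies on: it is explicit in the main-text sketch of Theorem~\ref{thm:exact_convergence} but only cited in the appendix proof. Your identity $z(\eta_\ell)=G(S_{\ell+1})(\eta_{\ell+1}-\eta_\ell)$ supplies the step the appendix leaves to Dinkelbach, and your restriction of monotonicity to $\ell\ge1$ when $\eta_0>\eta^\star$ correctly repairs the lemma's literal wording, since such a sequence can drop below $\eta^\star$ before rising.

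One slip, in a side remark outside the lemma: the seed $\eta_0$ need not lie in $\mathcal{R}_k$, so taking $\tau$ below the minimal gap of $\mathcal{R}_k$ does not stop the tolerance rule from firing prematurely at $t=0$. For example, with $k=1$, $c=(1,0.01)$, $w=(0.0105,1)$, $\eta_0=0.999$ and $\tau=0.01$, the algorithm stops at the suboptimal ratio $1<1.05$; this affects only the numerical stopping rule, not the monotonicity claim.
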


\begin{proof}
By Dinkelbach's original analysis, the function $z(\eta)$ is continuous, convex, and strictly decreasing
on intervals where the active maximizer has positive denominator.
The update rule $\eta_{\ell+1} = \Delta(\sS_\ell)$ ensures that the parameter moves toward the optimal value $\Delta_k^{\max}$ \citep[see][for convergence rate analysis]{schaible1976dinkelbach}. 

Consequently, the parameter sequence visits the distinct ratio values in a monotone order, each at most once. The algorithm cannot cycle because cycling would require revisiting the same parameter value, which contradicts monotonicity.
\end{proof}

\begin{lemma}[Finiteness of Iterations]
\label{lem:termination}
The algorithm terminates in at most $N$ iterations, where $N$ is the number of distinct ratio values.
\end{lemma}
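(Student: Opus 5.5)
We argue directly from the sign structure of $z(\eta)$ together with the update rule, rather than from continuity or convexity of $z$. Let $\sS_{\ell+1}$ denote the set selected by top-$k$ at parameter $\eta_\ell$, and put $\eta_{\ell+1}=W(\sS_{\ell+1})/G(\sS_{\ell+1})\in\mathcal{R}_k$. Because $\sS_{\ell+1}$ maximizes $F_{\eta_\ell}$,
\[
    W(\sS_{\ell+1})-\eta_\ell G(\sS_{\ell+1})=z(\eta_\ell).
\]
Dividing by $G(\sS_{\ell+1})>0$, which is allowed by \Cref{ass:positive_curvature}, gives the key identity
\[
    \eta_{\ell+1}-\eta_\ell=\frac{z(\eta_\ell)}{G(\sS_{\ell+1})}.
\]
Since $G(\sS_{\ell+1})>0$, the step $\eta_{\ell+1}-\eta_\ell$ has the same sign as $z(\eta_\ell)$. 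By the sign properties already established, this sign is also that of $\eta^\star-\eta_\ell$.

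We then split on the position of $\eta_\ell$ relative to $\eta^\star$.

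\emph{Case $\eta_\ell<\eta^\star$.} Here $z(\eta_\ell)>0$, so $\eta_{\ell+1}>\eta_\ell$. Every $\eta_{\ell+1}$ is a feasible ratio, so $\eta_{\ell+1}\le\eta^\star$ always holds. Hence, once some iterate with $\ell\ge 1$ lies in $\mathcal{R}_k$ and below $\eta^\star$, the iterates increase strictly and stay inside $\mathcal{R}_k\cap(-\infty,\eta^\star]$.

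\emph{Case $\eta_\ell=\eta^\star$.} Here $z(\eta_\ell)=0$, the selected set is optimal, and the next update returns $\eta^\star$ again. At that point the stopping rule fires, either through a zero residual or through $|\eta_{\ell+1}-\eta_\ell|=0<\tau$.

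\emph{Case $\eta_\ell>\eta^\star$.} This can only happen at $\ell=0$, since all later iterates are feasible ratios and so are at most $\eta^\star$. The first update moves the iterate into $\mathcal{R}_k$ at or below $\eta^\star$, which costs one extra ratio update. This accounts for the bound $M+1$ versus $M$.

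The counting step comes next. A strictly increasing sequence in a finite set of size $M$ (\Cref{lem:finiteness}) has at most $M$ terms. So after at most $M$ updates (plus one if $\eta^{(0)}>\eta^\star$) the sequence must stop increasing. By the key identity, a step with $\eta_{\ell+1}=\eta_\ell$ forces $z(\eta_\ell)=0$, that is, $\eta_\ell=\eta^\star$. At that point the selected set $\sS_{\ell+1}$ satisfies $W-\eta^\star G=0$, so it attains ratio $\eta^\star$ and is a maximizer.

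It remains to check the other stopping condition, $S^{(t+1)}=S^{(t)}$. If the same set is selected twice, then $\eta_{t+1}=\eta_t$, so $z(\eta_t)=0$ and the set is optimal. Thus every termination the algorithm can trigger returns an argmax.

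The main obstacle is making the bookkeeping of \Cref{lem:monotonicity} rigorous without appealing loosely to "Dinkelbach's analysis." The key identity handles this, because it gives strict monotonicity from a pure sign argument and needs no continuity or convexity of $z$. The remaining care is in the indexing: counting ratio updates versus iterations, and handling the one possibly infeasible initial seed $\eta^{(0)}$ separately from the iterates, which all lie in $\mathcal{R}_k$.
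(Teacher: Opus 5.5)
Your identity $\eta_{\ell+1}-\eta_\ell=z(\eta_\ell)/G(\sS_{\ell+1})$ is correct, and it is a cleaner route to monotonicity than the paper's. The paper proves this lemma in one line from \Cref{lem:monotonicity}, which only appeals to Dinkelbach's analysis. That lemma also asserts that the whole sequence is monotone, which fails for seeds above $\eta^\star$ (the first step goes down, later ones go up). Your sign argument needs no continuity or convexity of $z$ and handles that seed correctly. The gap is in the final count, which is the entire content of this lemma. First, the strictly increasing run you count lies in $\mathcal{R}_k$ only from $\eta_1$ on. The seed belongs to it only if $\eta^{(0)}\in\mathcal{R}_k$, which fails for generic seeds, including the default $\eta^{(0)}=0$ and the warm starts of \Cref{alg:dinkelbach-multi}. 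Second, once the run reaches $\eta^\star$ at some index $m$, the loop needs one more pass to observe $\eta_{m+1}=\eta_m$ before it exits. So your argument actually gives $\eta_1<\cdots<\eta_m=\eta^\star$ with all terms in $\mathcal{R}_k$, hence $m\le N$, and exit by pass $m+1\le N+1$. This holds for seeds below $\eta^\star$ exactly as for seeds above, so the extra pass has nothing to do with $\eta^{(0)}>\eta^\star$.

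Your claimed bound $M$ for $\eta^{(0)}\le\eta^\star$ is false. Take $n=2$, $k=1$, $(w_1,c_1)=(2,1)$, $(w_2,c_2)=(1.5,2)$, $T=3$, $\eta^{(0)}=0$, and any $\tau\le 1/2$. The ratios are $1$ and $1.5$, so $N=2$. Yet top-$1$ selection picks $\{1\}$ at $\eta=0$ (scores $2$ vs.\ $1.5$), then $\{2\}$ at $\eta=1$ ($3$ vs.\ $3.5$), then $\{2\}$ again at $\eta=1.5$ ($3.5$ vs.\ $4.5$), and exits only after the third pass. The same example shows that the $M$ versus $M+1$ split in the theorem, which you were reproducing, is itself off by one. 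The lemma's bound $N$ is therefore valid only as a count of distinct parameter values visited, i.e.\ updates needed to reach $\eta^\star$. That is what the paper's proof tacitly counts, and your argument does prove it for every seed. As a count of loop passes the right bound is $N+1$, improving to $N$ only when $\eta^{(0)}\in\mathcal{R}_k$ and $\eta^{(0)}\le\eta^\star$; you should say explicitly which quantity you bound. Two smaller points. The immediate exit at $\eta^\star$ uses $\tau>0$: with $\tau=0$ and several optimal sets, top-$k$ at $\eta^\star$ can return a different optimal set, costing one more pass before $S^{(t+1)}=S^{(t)}$. Also, your closing claim that every termination returns an argmax ignores that, with $\tau>0$, the rule $|\eta^{(t+1)}-\eta^{(t)}|<\tau$ can fire on a small nonzero increase below $\eta^\star$. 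That only shortens the run, so it leaves the iteration bound intact, but it breaks optimality.
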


\begin{proof}
By Lemma \ref{lem:monotonicity}, the parameter sequence is monotone and visits each distinct ratio value at most once. By Lemma \ref{lem:finiteness}, there are finitely many distinct ratio values. Therefore, the algorithm terminates after at most $N$ iterations.
\end{proof}

\noindent\textbf{Main Argument:} By Dinkelbach's fundamental result, the algorithm terminates when $z(\eta^*) = 0$, which certifies that $\eta^* = \Delta_k^{\max}$ and the corresponding selection is globally optimal. Lemmas \ref{lem:finiteness}--\ref{lem:termination} guarantee that this occurs in finite iterations. Thus, the algorithm outputs an exact global optimum. \qed



\begin{remark}[Multiple Optima]
If multiple selections achieve the maximum ratio value $\Delta_k^{\max}$, the algorithm terminates upon reaching the first one. All selections with ratio $\Delta_k^{\max}$ are globally optimal.
\end{remark}

\begin{remark}[Practical Efficiency]
Although the worst-case bound is $O(N)$ where $N \leq \binom{n}{k}$, empirical convergence is typically $3$--$10$ iterations \citep{schaible1976dinkelbach} reflecting superlinear convergence in the parameter.
\end{remark}

\begin{remark}[Constructive Optimality]
Termination with $z(\eta_\tau) = 0$ provides a constructive certificate that $\sS_\tau$ is globally optimal.
\end{remark}

\clearpage





\section{Proof of \Cref{thm:plm_selection}}\label{app:proof_plm_selection}

\subsection*{Problem Formulation}
Define
\[
    D_n(\sS) \coloneqq \sum_{j\notin\sS}\tilde x_j^2,
    \qquad
    d_n(\sS) \coloneqq \frac{D_n(\sS)}{n} - \mu_v,
\]
\[
    A_{\sS} \coloneqq \sum_{i\in\sS} v_i u_i,
    \qquad
    E_{\sS} \coloneqq \sum_{i\in\sS}\bigl(\tilde x_i \tilde r_i - v_i u_i\bigr),
\]
so that $\sum_{i\in\sS}\tilde x_i \tilde r_i = A_{\sS}+E_{\sS}$, and
\[
    \widehat Q_n(\sS)
    = n\bigl(\hat\beta - \hat\beta_{-\sS}\bigr)
    = \frac{A_{\sS}+E_{\sS}}{\mu_v + d_n(\sS)},\qquad
    Q_n^{\mathrm{or}}(\sS) = \frac{A_{\sS}}{\mu_v}.
\]
Write
\begin{align*}
    \delta_{n,k} \coloneqq \sup_{|\sS|=k}|E_{\sS}|,\ 
    B_{n,k} \coloneqq \sup_{|\sS|=k}|A_{\sS}|,\ 
    \rho_{n,k} \coloneqq \sup_{|\sS|=k}|d_n(\sS)|,
\end{align*}
all controlled by Assumption~\ref{ass:resid_score_stability}; in particular $\rho_{n,k}=o_p(1)$.\\

\begin{theorem*}[First-order MIS validity]
Fix $k$ and suppose $\mu_v = \E[v_i^2] > 0$ and \Cref{ass:resid_score_stability} holds. Then
\[
    \sup_{\sS\subset \left[ n \right], |\sS|=k}
    \left|
        \widehat Q_n(\sS) -
        Q_n^{\mathrm{or}}(\sS)
    \right| = o_p(1).
\]
Consequently, if
\[
    \hat\sS_k^{\max}\in\arg\max_{|\sS|=k}
        \widehat Q_n(\sS),
\]
then
\[
    \left|
    \widehat Q_n(\hat\sS_k^{\max}) -
    \max_{|\sS|=k}
        Q_n^{\mathrm{or}}(\sS)
    \right| = o_p(1).
\]

If the oracle first-order maximizer $\sS_k^{\mathrm{or}}$ is unique and
\[
    \Gamma_{n,k} :=
    Q_n^{\mathrm{or}}(\sS_k^{\mathrm{or}}) -
    \max_{\substack{|\sS|=k\\\sS\ne\sS_k^{\mathrm{or}}}}
        Q_n^{\mathrm{or}}(\sS),
\]
satisfies
\[
    \frac{
        \sup_{\sS\subset \left[ n \right], |\sS|=k}
        \left|
            \widehat Q_n(\sS)-Q_n^{\mathrm{or}}(\sS)
        \right|
    }{
        \Gamma_{n,k}
    } = o_p(1),
\]
then
\[
    \Pr(\hat\sS_k^{\max}=\sS_k^{\mathrm{or}})\to1.
\]
\end{theorem*}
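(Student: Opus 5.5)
The plan has three layers: a deterministic algebraic bound on the approximation error, a conversion of that bound into a value statement, and an argmax argument for set recovery.

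\textbf{Uniform approximation.} Work with the decomposition just displayed, $\widehat Q_n(\sS) = (A_{\sS}+E_{\sS})/(\mu_v+d_n(\sS))$ and $Q_n^{\mathrm{or}}(\sS)=A_{\sS}/\mu_v$. Write the difference as
\[
    \widehat Q_n(\sS)-Q_n^{\mathrm{or}}(\sS)
    = \frac{E_{\sS}}{\mu_v+d_n(\sS)} - \frac{A_{\sS}\, d_n(\sS)}{\mu_v\,(\mu_v+d_n(\sS))}.
\]
Work on the event $\Omega_n=\{\rho_{n,k}<\mu_v/2\}$. There, uniformly over $|\sS|=k$, we have $\mu_v+d_n(\sS)\ge \mu_v/2$. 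This gives the bound
\[
    \sup_{|\sS|=k}\bigl|\widehat Q_n(\sS)-Q_n^{\mathrm{or}}(\sS)\bigr|
    \le \frac{2}{\mu_v}\delta_{n,k}+\frac{2}{\mu_v^2}B_{n,k}\rho_{n,k} =: R_n .
\]
Since $\rho_{n,k}=o_p(1)$ and $\mu_v>0$ is fixed, $\Pr(\Omega_n)\to1$. By \Cref{ass:resid_score_stability}, $R_n=o_p(1)$. Denote the left-hand side by $\Delta_n$. For any $\varepsilon>0$ we get $\Pr(\Delta_n>\varepsilon)\le \Pr(\Omega_n^c)+\Pr(R_n>\varepsilon)\to0$, which proves the first claim. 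I take \Cref{ass:positive_curvature} (or its implication by $\Omega_n$ for large $n$) to ensure the ratio is well defined on the feasible class.

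\textbf{Value consistency.} Use the elementary fact that $|\max f-\max g|\le\sup|f-g|$ over the same finite feasible class. Since $\widehat Q_n(\hat\sS_k^{\max})=\max_{|\sS|=k}\widehat Q_n(\sS)$, this gives
\[
    \bigl|\widehat Q_n(\hat\sS_k^{\max})-\max_{|\sS|=k}Q_n^{\mathrm{or}}(\sS)\bigr|\le\Delta_n=o_p(1).
\]
To justify the fact concretely, evaluate at $\hat\sS_k^{\max}$ and at $\sS_k^{\mathrm{or}}$. This yields the two one-sided inequalities $\widehat Q_n(\hat\sS)\ge \widehat Q_n(\sS^{\mathrm{or}})\ge Q_n^{\mathrm{or}}(\sS^{\mathrm{or}})-\Delta_n$ and $\widehat Q_n(\hat\sS)\le Q_n^{\mathrm{or}}(\hat\sS)+\Delta_n\le Q_n^{\mathrm{or}}(\sS^{\mathrm{or}})+\Delta_n$.

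\textbf{Set recovery.} Suppose $\hat\sS\ne\sS^{\mathrm{or}}$. Then $Q_n^{\mathrm{or}}(\hat\sS)\le Q_n^{\mathrm{or}}(\sS^{\mathrm{or}})-\Gamma_{n,k}$, and combining the inequalities above gives
\[
    \widehat Q_n(\sS^{\mathrm{or}})\le \widehat Q_n(\hat\sS)\le Q_n^{\mathrm{or}}(\sS^{\mathrm{or}})-\Gamma_{n,k}+\Delta_n\le \widehat Q_n(\sS^{\mathrm{or}})+2\Delta_n-\Gamma_{n,k}.
\]
Hence $\Gamma_{n,k}\le 2\Delta_n$, so $\{\hat\sS\ne\sS^{\mathrm{or}}\}\subseteq\{\Delta_n/\Gamma_{n,k}\ge 1/2\}$. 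The probability of the latter event tends to zero by the gap hypothesis. Uniqueness ensures $\Gamma_{n,k}>0$, so the ratio is defined.

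\textbf{Main obstacle.} All steps are elementary; the only delicate point is the denominator. One must work on $\Omega_n$ so that the lower bound $\mu_v/2$ holds uniformly over all $\binom{n}{k}$ sets, and then transfer the bound back to unconditional $o_p(1)$ statements. A second subtlety is that $\Gamma_{n,k}$ and the maximizers are random, so the recovery argument should be pathwise, with probability entering only at the final step.
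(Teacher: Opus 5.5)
Your proof is correct and follows the paper's argument essentially step for step. It uses the same identity $\widehat Q_n(\sS)-Q_n^{\mathrm{or}}(\sS)=E_{\sS}/(\mu_v+d_n(\sS))-A_{\sS}d_n(\sS)/\{\mu_v(\mu_v+d_n(\sS))\}$, the same high-probability event on which the denominator is at least $\mu_v/2$ uniformly, and the same deduction $\Gamma_{n,k}\le 2\Delta_n$ on $\{\hat\sS_k^{\max}\neq\sS_k^{\mathrm{or}}\}$. The only difference is in your favor: your value step bounds $\widehat Q_n(\hat\sS_k^{\max})-\max_{|\sS|=k}Q_n^{\mathrm{or}}(\sS)$ directly via $|\max f-\max g|\le\sup|f-g|$, which is exactly the quantity in the theorem, whereas the paper's Step~2 only shows that $Q_n^{\mathrm{or}}(\hat\sS_k)$ is within $2\Delta_n$ of the oracle optimum and leaves the final triangle inequality implicit.
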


\subsection*{Proof}
\paragraph{Step 1: uniform approximation.}
A direct rearrangement gives the identity
\[
    \widehat Q_n(\sS) - Q_n^{\mathrm{or}}(\sS)
    \;=\;
    \frac{E_{\sS}}{\mu_v+d_n(\sS)}
    \;-\;
    \frac{A_{\sS}\,d_n(\sS)}{\mu_v\bigl(\mu_v+d_n(\sS)\bigr)},
\]
hence
\[
    \bigl|\widehat Q_n(\sS) - Q_n^{\mathrm{or}}(\sS)\bigr|
    \;\le\;
    \frac{|E_{\sS}|}{|\mu_v+d_n(\sS)|}
    \;+\;
    \frac{|A_{\sS}|\,|d_n(\sS)|}{\mu_v\,|\mu_v+d_n(\sS)|}.
\]
Since $\rho_{n,k}=o_p(1)$, the event $\bigl\{\inf_{|\sS|=k}\,(\mu_v+d_n(\sS))\ge \mu_v/2\bigr\}$ has probability tending to one. On this event, taking suprema yields
\[
    \sup_{|\sS|=k} \bigl|\widehat Q_n(\sS) - Q_n^{\mathrm{or}}(\sS)\bigr|
    \;\le\;
    \frac{2\,\delta_{n,k}}{\mu_v}
    \;+\;
    \frac{2\,B_{n,k}\,\rho_{n,k}}{\mu_v^{2}}
    \;=\; o_p(1).
\]

\paragraph{Step 2: value consistency.}
Let
\[
    \hat\sS_k\in\arg\max_{|\sS|=k}\widehat Q_n(\sS),
    \qquad
    \sS_k^{\mathrm{or}}\in\arg\max_{|\sS|=k}Q_n^{\mathrm{or}}(\sS).
\]
By optimality of $\hat\sS_k$, $\widehat Q_n(\hat\sS_k)\ge \widehat Q_n(\sS_k^{\mathrm{or}})$, and combining this with the uniform approximation gives
\[
    Q_n^{\mathrm{or}}(\hat\sS_k)
    \;\ge\;
    Q_n^{\mathrm{or}}(\sS_k^{\mathrm{or}})
    \;-\;
    2\sup_{|\sS|=k}\bigl|\widehat Q_n(\sS) - Q_n^{\mathrm{or}}(\sS)\bigr|.
\]
Hence the oracle value of the empirical maximizer is within $o_p(1)$ of the oracle optimum.

\paragraph{Step 3: selection consistency.}
Assume the oracle maximizer is unique, with gap
\[
    \Gamma_{n,k}
    \;\coloneqq\;
    Q_n^{\mathrm{or}}(\sS_k^{\mathrm{or}})
    \;-\;
    \max_{\substack{|\sS|=k\\ \sS\ne\sS_k^{\mathrm{or}}}}
    Q_n^{\mathrm{or}}(\sS).
\]
If $\hat\sS_k\ne\sS_k^{\mathrm{or}}$, then
$Q_n^{\mathrm{or}}(\hat\sS_k)\le Q_n^{\mathrm{or}}(\sS_k^{\mathrm{or}})-\Gamma_{n,k}$, and Step~2 forces
\[
    2\sup_{|\sS|=k}\bigl|\widehat Q_n(\sS) - Q_n^{\mathrm{or}}(\sS)\bigr|
    \;\ge\;
    \Gamma_{n,k}.
\]
Therefore, provided
\[
    \Gamma_{n,k}^{-1}
    \sup_{|\sS|=k}\bigl|\widehat Q_n(\sS) - Q_n^{\mathrm{or}}(\sS)\bigr|
    \;=\; o_p(1),
    \tag{$\star$}\label{eq:ratio_condition}
\]
we have $\Pr\!\bigl(\hat\sS_k=\sS_k^{\mathrm{or}}\bigr)\to 1$.

\subsection{When does condition \texorpdfstring{\eqref{eq:ratio_condition}}{($\star$)} hold?}

Since $\rho_{n,k} = o_p(1)$ gives
$\inf_{|\sS|=k}\bigl\{\mu_v+d_n(\sS)\bigr\}=\mu_v+o_p(1)$, the bound of Step~1 sharpens to the rate
\[
    \sup_{|\sS|=k}\bigl|\widehat Q_n(\sS) - Q_n^{\mathrm{or}}(\sS)\bigr|
    \;=\;
    O_p\!\left(
        \frac{\delta_{n,k}}{\mu_v}
        \;+\;
        \frac{B_{n,k}\,\rho_{n,k}}{\mu_v^{2}}
    \right).
\]
Both summands being nonnegative, condition~\eqref{eq:ratio_condition} is equivalent to
\[
    \Gamma_{n,k}^{-1}\bigl(\delta_{n,k} + B_{n,k}\,\rho_{n,k}\bigr)
    \;=\; o_p(1),
\]
which in turn is equivalent to the two margin-dominance conditions holding jointly:
\[
\boxed{
    \delta_{n,k} \;=\; o_p(\Gamma_{n,k}),
    \qquad
    B_{n,k}\,\rho_{n,k} \;=\; o_p(\Gamma_{n,k}).
}
\]

\section{Joint Influence, Masking, and Greedy Failure} \label{app:interaction_definitions}

The following definitions are stated for a generic set objective
\[
    H_k(\sS), \qquad \sS\subset[n], \quad |\sS|=k,
\]
where, in the main applications,
\[
    H(\sS) = \frac{W(\sS)}{G(\sS)} =
    \frac{
        \sum_{i\in\sS} w_i
    }{
        T - \sum_{i\in\sS}c_i
    }.
\]
For signed analyses in the opposite direction, the same definitions apply after replacing $H$ by the corresponding direction-specific objective.

\begin{definition}[Marginal influence]
For a set $\sS\subset[n]$ and an observation $i \notin \sS$, the marginal gain from adding $i$ to $\sS$ is
\[
    \Delta(i\mid\sS) \coloneqq
        H(\sS\cup\{i\}) - H(\sS).
\]
When $\sS=\varnothing$, we call $H(\{i\})$ the singleton influence of observation $i$.
\end{definition}

\begin{definition}[Joint influence]
For a set $\sS\subset[n]$ with $|\sS|=k$, define its excess joint influence relative to singleton effects as
\[
    J(\sS)
    \coloneqq
    H_k(\sS)
    -
    \sum_{i\in\sS} H_1(\{i\}).
\]
The set $\sS$ exhibits joint influence if $J(\sS) > 0$, and it exhibits strong joint influence at level $\gamma > 0$ if
\[
    H(\sS) \geqslant
    \sum_{i\in\sS} H_1(\{i\}) + \gamma.
\]
Thus joint influence occurs when the set effect is larger than what would be suggested by adding the singleton influences of its members.
\end{definition}

\begin{definition}[Masking]
Let $\sS_k^\star\in\arg\max_{|\sS|=k}H_k(\sS)$ be an optimal size-$k$ set.
An observation $i\in\sS_k^\star$ is masked at size $k$ if it belongs to an optimal size-$k$ set but is not among the top $k$ observations by singleton influence:
\[
    i\in\sS_k^\star
    \quad\text{ and }\quad
    H_1(\{i\})
    <
    H_1(\{j\}),
\]
for at least $k$ observations $j\notin\sS_k^\star$.
Equivalently, singleton ranking alone would exclude $i$, even though $i$ is part of an optimal influential set.
\end{definition}

\begin{definition}[Greedy path]
The greedy path is the sequence $(\sG_k)_{k\ge1}$ defined recursively by
\[
    \sG_0=\varnothing, 
        \quad
    \sG_k = \sG_{k-1}\cup
    \left\{
        \arg\max_{i\notin\sG_{k-1}}
        H_k(\sG_{k-1}\cup\{i\})
    \right\},
\]
with ties resolved by a fixed deterministic rule. The greedy path is nested by construction:
\[
    \sG_1\subset\sG_2\subset\cdots .
\]
\end{definition}

\begin{definition}[Greedy failure]
Greedy selection fails at size $k$ if the greedy set is not globally optimal:
\[
    H_k(\sG_k) < H_k(\sS_k^\star).
\]

A sufficient structural reason for greedy failure is non-nestedness; if an early greedy choice is absent from every optimal size-$k$ set, then the nested greedy path cannot recover the optimal size-$k$ solution.
\end{definition}

For the linear-fractional objective in the main text, marginal gains depend on the current set.
Writing $w_i=\tilde x_i\tilde r_i$, $c_i=\tilde x_i^2$, $W(\sS)=\sum_{j\in\sS}w_j$, and $G(\sS)=T-\sum_{j\in\sS}c_j$, we have
\[
    H(\sS\cup\{i\})-H(\sS)
    =
    \frac{
        w_iG(\sS)+c_iW(\sS)
    }{
        G(\sS)\{G(\sS)-c_i\}
    }.
\]
Thus the gain from adding $i$ depends not only on its own score $w_i$ and curvature $c_i$, but also on the accumulated score $W(\sS)$ and remaining curvature $G(\sS)$ of the current set.
This set dependence allows joint influence, masking, non-nested optimal paths, and greedy failure.

\section{Additional Results}\label{app:add_results}

\subsection*{Influence Convergence Simulation} \label{app:conv}

We consider the partial linear model from \Cref{sec:prob}. To evaluate the finite sample behavior of \Cref{thm:plm_selection}, we generate data with the following structure. Covariates are drawn as $Z_{ij} \stackrel{\text{iid}}{\sim} \text{Uniform}(0,1)$ for $j = 1, \ldots, p_z$. Treatment assignment incorporates confounding through
\begin{equation}
    X_i = h(\mathbf{Z}_i) + v_i,
\end{equation}
where $h(Z) = 0.5 z_1 - 0.2 z_2$ and $v_i$ follows a mixture distribution to create high leverage sets:
\begin{equation}
    v_i \sim \begin{cases}
        \mathcal{N}(8, 0.25^2) & \text{for } i = n, n-1, n-2, \\
        \mathcal{N}(6, 0.25^2) & \text{for } i = n-3, n-4, n-5,\\
        \mathcal{N}(0, 1) & \text{for } 1, \ldots, n-6.
    \end{cases}
\end{equation}

The nonparametric component in $Y$ exhibits sparsity, depending on only four covariates:
\begin{equation}
    g(\mathbf{z}) = 2\sin(2\pi z_1) + 1.5 z_2^2 + z_3 z_4.
\end{equation}

\paragraph{Heterogeneous Treatment Effects.}
To ensure high influence for some observations, we generate outcomes with treatment effect heterogeneity, violating constant $\beta$:
\begin{equation}
    Y_i = g(\mathbf{Z}_i) + \beta_i X_i + \varepsilon_i,
\end{equation}
where
\begin{equation*}
    (\beta_i, \varepsilon_i) \sim \begin{cases}
        (-0.5, \mathcal{N}(0, 1)) & \text{for } i = 1, \ldots, n-6 \\
        (0.1, \mathcal{N}(0, 0.1^2)) & \text{for } i = n-5, n-4, n-3 \\
        (0.4, \mathcal{N}(0, 0.1^2)) & \text{for } i = n-2, n-1, n.
    \end{cases}
\end{equation*}
This design extends the illustration of masking and joint influence in \citet{kuschnig_hidden_2021}. Each setup is resampled, for a total of one thousand Monte Carlo draws.
\Cref{fig:app_convergence} displays the mean and $95\%$ confidence bounds of the simulation results.

\begin{figure}[htpb]
    \centering
    \includegraphics[width=\linewidth]{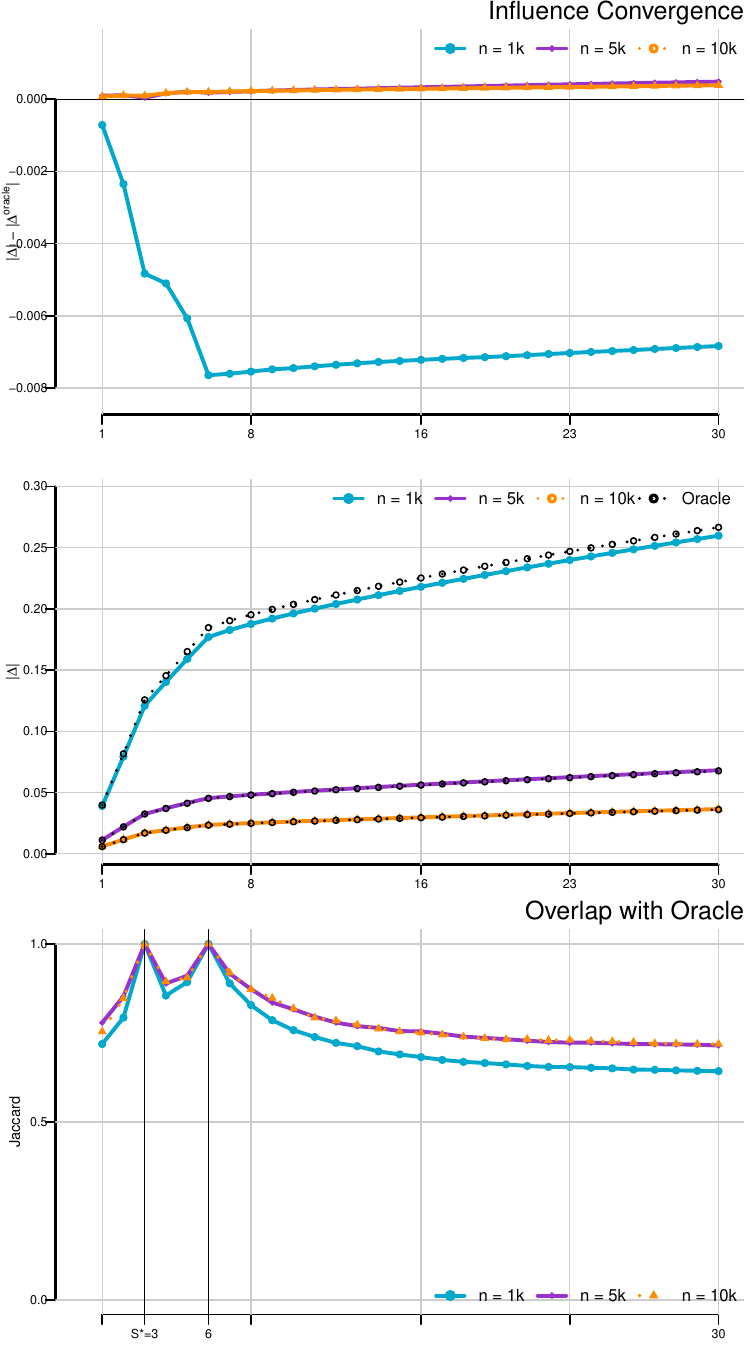}
    \caption{Simulation study to assess finite sample convergence of the limiting results. Specifically, $\Delta(\hat\sS_k) \to \Delta^{\mathrm{oracle}}(\sS_k^{\max})$ (top) and $\hat\sS_k \to \sS_k^{\mathrm{oracle}}$ (bottom). Per construction $k=3$ and $k=6$ are particularly influential.}
    \label{fig:app_convergence}
\end{figure}

\paragraph{Simulation Results} While influence converges quickly and uniformly to the oracle, the associated $\sS^\star$ converges much slower. While even for medium sized sample size residualization works well with an average Jaccard similarity of almost one for all sample sizes, accuracy is substantially worse if added observations are not influential. Thus knowing the correct $k$ is beneficial for set accuracy in finite samples. If inference on $\sS^\star$ is not desired but merely robustness, even for moderate $n$ residualization recovers the true $\Delta_k^{\max}$ well.

\subsection{Linearized Text Embeddings} \label{app:embed}

The sentence-pairs and similarity scores are:
\begin{itemize}
    \item[100\%] `Romney picks Ryan as vice presidential running mate: source'
    \item[100\%] `Romney to tap Ryan as vice presidential running mate'
    \item[0\%] `An older man is standing outside in front of a truck.'
    \item[0\%] `A woman dressed in green is roller skating outside at an event.'
    \item[54\%] `A new study, conducted in Europe, found the medicine worked just as well as an earlier disputed study, sponsored by ImClone Systems, said it did.'
    \item[54\%] `Doctors concluded Erbitux, the cancer drug that enmeshed ImClone Systems in an insider trading scandal, worked just as well as an earlier company-sponsored study said it did.'
\end{itemize}

Traces of the predicted similarity scores as $k$ increases are shown in \Cref{fig:embedding}.

\begin{figure}[htpb]
    \centering
    \includegraphics[width=\linewidth]{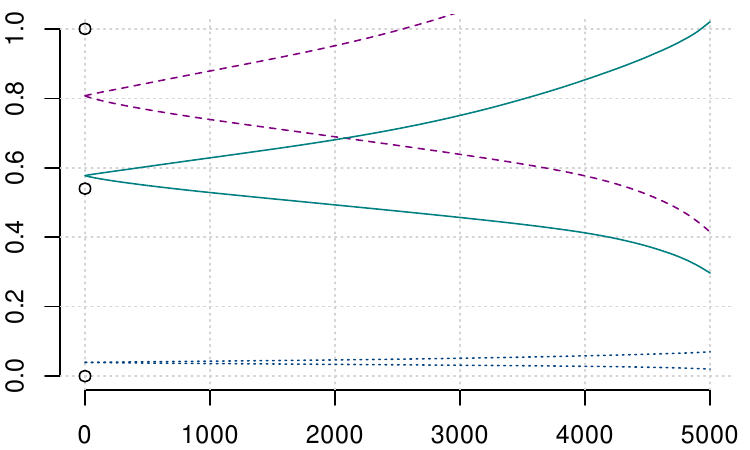}
    \caption{MIS traces for three sentence predictions. Points depict true values.}
    \label{fig:embedding}
\end{figure}

\subsection{Additional Applications} \label{app:add}

The datasets considered are listed in \Cref{tab:datasets}; results are provided as Supplementary Material.

\begin{table}[htpb]
\centering
    \caption{
        Analyzed datasets, sample sizes, and sources.
    } \label{tab:datasets}
    \begin{tabular}{lrl}
    \toprule \small
      Name & $n$ & Source \\ 
    \midrule
      Saltmarsh Sparrow & 1295 & \citet{gjerdrum_egg_2008} \\ 
      Mistrust & 20062 & \citet{nunn_slave_2011} \\ 
      Tsetse Fly & 522 & \citet{alsan_effect_2015} \\ 
      Wine Quality & 1599 & \citet{cortez2009wine} \\ 
      Adult Income & 32561 & \citet{kohavi1996adult} \\ 
      California Housing & 20640 & \citet{pace1997californiahousing} \\
      Boston Housing & 506 & \citet{harrison1978bostonhousing} \\ 
      Diamonds & 53940 & \citet{wickham2016ggplot2} \\ 
      Star Cluster &  47 & \citep{maechler2021package} \\ 
      AM Example &  12 & \citep{maechler2021package} \\ 
      Brain to Body &  65 & \citep{maechler2021package} \\ 
      Lactic Acid &  20 & \citep{maechler2021package} \\ 
      Plant Data &  20 & \citep{maechler2021package} \\ 
      Telephone Calls &  24 & \citep{maechler2021package} \\ 
      Stackloss &  21 & \citep{maechler2021package} \\ 
      HBK Data &  75 & \citep{maechler2021package} \\ 
      Wood &  20 & \citep{maechler2021package} \\ 
      Salinity &  28 & \citep{maechler2021package} \\ 
      Phosphorus &  18 & \citep{maechler2021package} \\ 
      Education &  50 & \citep{maechler2021package} \\ 
      Aircraft Data &  23 & \citep{maechler2021package} \\ 
      May Air Pollution &  31 & \citep{maechler2021package} \\ 
      Bushfire &  38 & \citep{maechler2021package} \\ 
      Cloud Point &  19 & \citep{maechler2021package} \\ 
      Soil Chemistry & 428 & \citep{maechler2021package} \\ 
      Heart Catheter &  12 & \citep{maechler2021package} \\ 
      Pension Fund &  18 & \citep{maechler2021package} \\ 
      Pulp Fiber &  62 & \citep{maechler2021package} \\ 
      Toxicity &  38 & \citep{maechler2021package} \\ 
      Wagner Growth &  63 & \citep{maechler2021package} \\ 
    \bottomrule
    \end{tabular}
\end{table}

\subsection{Comparison with Influence Function Approximations}

\autoref{fig:amip} compares exact $\Delta(\mathbb{S})$ with their AMIP approximation \citep{broderick_automatic_2021}.

\begin{figure}[htpb]
    \centering
    \includegraphics[width=\linewidth]{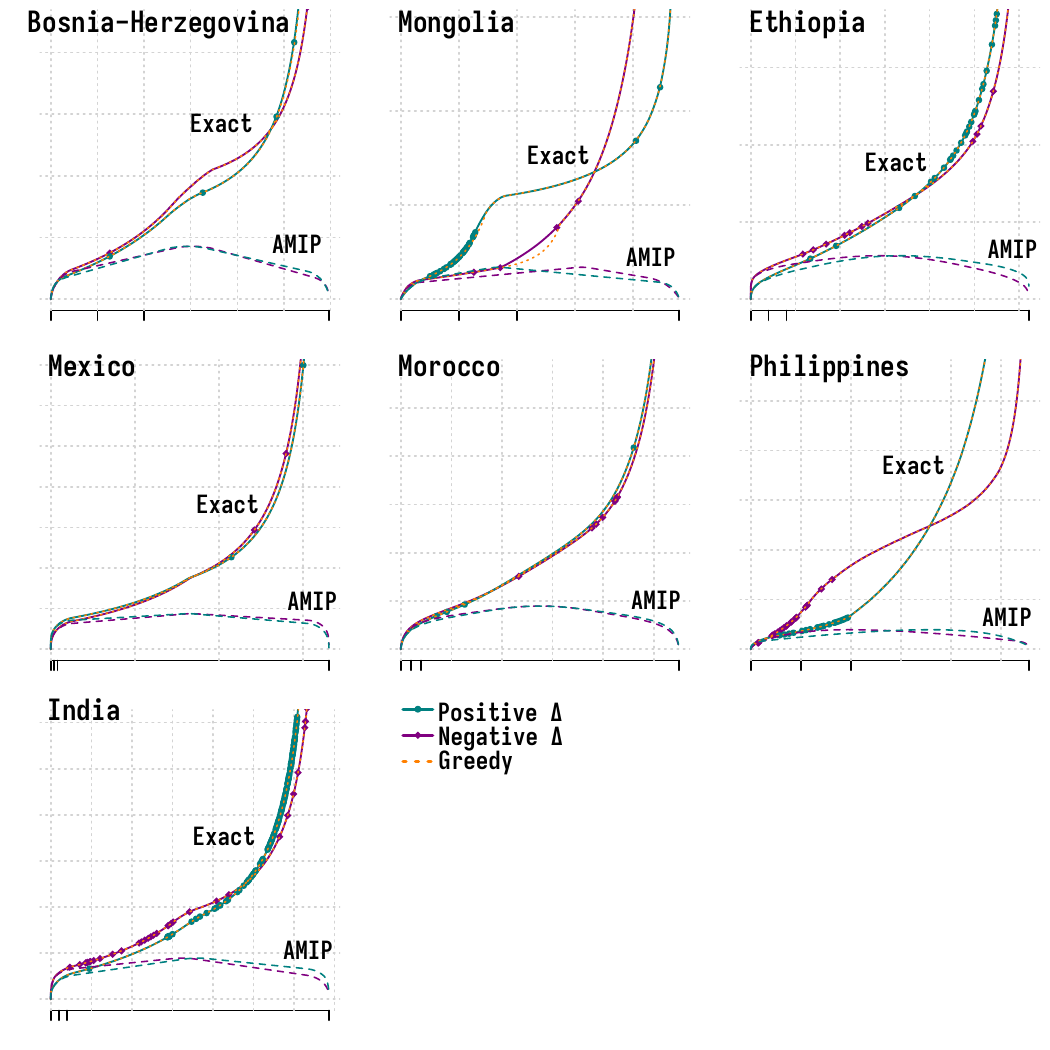}
    \caption{Comparison of exact MIS impacts on the seven microcredit trials with their approximate maximum influence perturbation \citep{broderick_automatic_2021}.}
    \label{fig:amip}
\end{figure}

\end{document}